\newcommand{\blind}{0}
\newcommand{\E}{\mathbb{E}}
\newcommand{\p}{\mathbb{P}}
\newcommand{\R}{\mathbb{R}}
\theoremstyle{plain}
\newtheorem{example}{Example}
\newtheorem{definition}{Definition}
\newtheorem{theorem}{Theorem}
\newtheorem{assumption}{Condition}
\begin{document}

\def\spacingset#1{\renewcommand{\baselinestretch}%
{#1}\small\normalsize} \spacingset{1}

%%%%%%%%%%%%%%%%%%%%%%%%%%%%%%%%%%%%%%%%%%%%%%%%%%%%%%%%%%%%%%%%%%%%%%%%%%%%%%

\if0\blind
{
  \title{\bf On Learning and Testing of Counterfactual Fairness through Data Preprocessing}
  \author{Haoyu Chen, Wenbin Lu, Rui Song \\%\thanks{
    %The authors gratefully acknowledge \textit{please remember to list all relevant funding sources in the unblinded version}}\hspace{.2cm}\\
    Department of Statistics, North Carolina State University\\
    and \\
    Pulak Ghosh \\
    Decision Sciences \& Centre for Public Policy, Indian Institute of Management%Bangalore, India
    }
\date{\empty}
  \maketitle
} \fi

\if1\blind
{
  \bigskip
  \bigskip
  \bigskip
  \begin{center}
    {\LARGE\bf On Learning and Testing of Counterfactual Fairness through Data Preprocessing}
\end{center}
  \medskip
} \fi

\bigskip
\begin{abstract}
	Machine learning has become more important in real-life decision-making but people are concerned about the ethical problems it may bring when used improperly. Recent work brings the discussion of machine learning fairness into the causal framework and elaborates on the concept of Counterfactual Fairness. In this paper, we develop the Fair Learning through dAta Preprocessing (FLAP) algorithm to learn counterfactually fair decisions from biased training data and formalize the conditions where different data preprocessing procedures should be used to guarantee counterfactual fairness. We also show that Counterfactual Fairness is equivalent to the conditional independence of the decisions and the sensitive attributes given the processed non-sensitive attributes, which enables us to detect discrimination in the original decision using the processed data. The performance of our algorithm is illustrated using simulated data and real-world applications.
\end{abstract}
		
\noindent%
{\it Keywords:} fairness learning, causal inference, machine learning ethics, structural causal model, conditional independence test
\vfill

\newpage
\spacingset{1.5} % DON'T change the spacing!
\section{Introduction}\label{sec:introduction}
The rapid popularization of machine learning methods and the growing availability of personal data have enabled decision-makers from various fields such as graduate admission~\citep{waters2014grade}, hiring~\citep{ajunwa2016hiring}, credit scoring~\citep{thomas2009consumer}, and criminal justice~\citep{brennan2009evaluating} to make data-driven decisions efficiently. However, the community and the authorities have also raised concern that these automatically learned decisions may inherit the historical bias and discrimination from the training data and would cause serious ethical problems when used in practice~\citep{nature2016more,angwin2016bias,dwoskin2015social,executive2016big}.

Consider a training dataset $\mathcal{D}$ consisting of sensitive attributes $S$ such as gender and race, non-sensitive attributes $A$ and decisions $Y$. If the historical decisions $Y$ are not fair across the sensitive groups, a powerful machine learning algorithm will capture this pattern of bias and yield learned decisions $\hat{Y}$ that mimic the preference of the historical decision-maker, and it is often the case that the more discriminative an algorithm is, the more discriminatory it might be. 

While researchers agree that methods should be developed to learn fair decisions, opinions vary on the quantitative definition of fairness. In general, researchers use either the \textit{observational} or \textit{counterfactual} approaches to formalize the concept of fairness. The observational approaches often describe fairness with metrics of the observable data and predicted decisions~\citep{hardt2016equality,chouldechova2017fair,yeom2018discriminative}. For example, Demographic Parity (DP) or Group Fairness~\citep{zemel2013learning,khademi2019fairness} considers the learned decision $\hat{Y}$ to be fair if it has the same distribution for different sensitive groups, i.e., $P(\hat{Y}|S = s) = P(\hat{Y}|S = s')$. The Individual Fairness (IF) definition~\citep{dwork2012fairness} views fairness as treating similar individuals similarly, which means the distance between $\hat{Y}(s_i, a_i)$ and $\hat{Y}(s_j, a_j)$ should be small if individuals $i$ and $j$ are similar.

The other branch of fairness and/or discrimination definitions are built upon the causal framework of \citet{pearl2009causal}, such as direct/indirect discrimination~\citep{zhang2017causal,nabi2018fair}, path-specific effect~\citep{wu2019pc}, counterfactual error rate~\citep{zhang2018equality} and counterfactual fairness~\citep{kusner2017counterfactual,wang2019equal,wu2019counterfactual}. These definitions often involve the notion of counterfactuals, which means what the attributes or decision would be if an individual were in a different sensitive group. With the help of the potential outcome concept, the measuring of fairness is no longer restricted to the observable quantities~\citep{kilbertus2017avoiding,zhang2018fairness}. For example, the Equal Opportunity (EO) definition \cite{wang2019equal} has the same idea as IF but it can directly compare the actual and counterfactual decisions of the same individual instead of the actual decisions of two similar individuals. The Counterfactual Fairness (CF) definition~\citep{kusner2017counterfactual} or equivalently, the Affirmative Action (AA) definition~\citep{wang2019equal} goes one step further than EO and derives the counterfactual decisions from the counterfactual non-sensitive attributes. It first asks what the non-sensitive attributes $A$ would be had $S$ been different and then compare the counterfactual decisions of an individual given her/his counterfactual non-sensitive attributes. For example, a female student with a test score of 85 and a male student with the same score should have the same probability of being admitted under the Equal Opportunity definition. When considering the Counterfactual Fairness definition, we first imagine a counterfactual world where the female student were treated as a boy since she was born. There she received the same educational resources as her male siblings in her family and finally reached a score of 95. We would then use 95 as her counterfactual non-sensitive attribute and conclude that she should have a higher probability of being admitted than her male competitor. The fact that her score is 85 in the real world is due to the historical disadvantage of limited education resources and the Equal Opportunity definition will ignore this kind of unfairness.
We adopt CF as our definition of fairness and it is formally described in Section \ref{sec:model}. We believe causal reasoning is the key to fair decisions as \citet{dedeo2014wrong} pointed out that even the most successful algorithms would fail to make fair judgments due to the lack of causal reasoning ability. %On the other hand, \citet{hardt2016equality} were able to construct two scenarios with different causal interpretations but the same joint distributions over $(S, A, Y, \hat{Y})$ so that no observational approach can distinguish between them. Among the observational approaches, the popular DP criterion could yield unfair results~\citep{kusner2017counterfactual}, and the IF criterion could be replaced by the EO criterion but it still cannot address the bias introduced by non-sensitive attributes. Therefore, we adopt CF as our definition of fairness. 

For the observational definitions, fair decisions can be learned by solving optimization problems, either adding the fairness condition as a constraint~\citep{dwork2012fairness} or directly optimize the fairness metric as an objective function~\citep{zemel2013learning}. When using the counterfactual definitions, however, an approximation of the causal model or the counterfactuals is often needed since the counterfactuals are unobservable. In the FairLearning algorithm proposed by \citet{kusner2017counterfactual}, the unobserved parts of the graphical causal model are sampled using the Markov chain Monte Carlo method. Then they use only the non-descendants of $S$ to learn the decision, which ensures CF but will have a low prediction accuracy. In \citet{wang2019equal}, the counterfactual of $A$ had $S$ been $s'$ is imputed as the sum of the counterfactual group mean $\E(A|S=s')$ and the residuals from the original group $A - \E(A|S=s)$. As we discuss later, this approach would only work when a strong assumption of the relationship between $A$ and $S$ is satisfied.

\subsection{Contributions}
We develop the Fair Learning through dAta Preprocessing (FLAP) algorithm to learn counterfactually fair decisions from biased training data. While current literature is vague about the assumptions needed for their algorithms to achieve fairness, we formalize the weak and strong conditions where different data preprocessing procedures should be used to guarantee CF and prove the results under the causal framework of \citet{pearl2009causal}. We show that our algorithm can predict fairer decisions with similar accuracy when compared with other counterfactual fair learning algorithms using three simulated datasets and three real-world applications, including the loan approval data from a fintech company, the adult income data, and the COMPAS recidivism data. 

On the other hand, the processed data also enable us to detect discrimination in the original decision. We prove that CF is equivalent to the conditional independence of the decisions and the sensitive attributes given the processed non-sensitive attributes under certain conditions. Therefore any well-established conditional independence tests can be used to test CF with the processed data. To our knowledge, it is the first time that a formal statistical test for CF is proposed. We illustrate the idea using the Conditional Distance Correlation test~\citep{wang2015conditional} in our simulation and test the fairness of the decisions in the loan approval data using a parametric test.

\section{Causal Model and Counterfactual Fairness}\label{sec:model}
For the discussion below, we consider the sensitive attributes $S \in \mathcal{S}$ to be categorical, which is a reasonable restriction for the commonly discussed sensitive information such as race and gender. The non-sensitive attributes $A \in \mathcal{A} \subseteq \R^d$, and the decision $Y$ is binary as admit or not in graduate admission, hire or not in the hiring process, approve or not in loan assessment.

\begin{figure}[!htbp]
	\centering
	\begin{tikzpicture}[
	roundnode/.style={circle, draw=black, thick, minimum size=6mm},
	ellipsenode/.style={ellipse, draw=black, thick, minimum size=6mm},
	squarednode/.style={rectangle, draw=black, thick, minimum size=6mm},
	]
	%Nodes
	\node[squarednode](uyhat) at (0,0) {$U_{\hat{Y}}$};
	\node[roundnode](yhat) at (1.2,0) {$\hat{Y}$};
	\node[roundnode](y) at (2.4,0) {$Y$};
	\node[squarednode](uy) at (3.6,0) {$U_Y$};
	\node[roundnode](s) at (1.2,1.2) {$S$};
	\node[squarednode](us) at (0,1.2) {$U_S$};
	\node[roundnode](a) at (2.4,1.2) {$A$};
	\node[squarednode](ua) at (3.6,1.2) {$U_A$};
	
	%Text
	\node[draw,align=left] at (8,0.6) {\small$\begin{aligned}
	    S &= f_S(U_S),\\
	    A &= f_A(S, U_A),\\
	    Y &= f_Y(S, A, U_Y),\\
	    \hat{Y} &= f_{\hat{Y}}(S, A, U_{\hat{Y}}) = \mathbf{1}\{U_{\hat{Y}}<p(S, A)\}.
	\end{aligned}$};
	
	%Lines
	\draw[thick, ->] (uyhat.east) -- (yhat.west);
	\draw[thick, ->] (us.east) -- (s.west);
	\draw[thick, ->] (s.east) -- (a.west);
	\draw[thick, ->] (s.south east) -- (y.north west);
	\draw[thick, ->] (s.south) -- (yhat.north);
	\draw[thick, ->] (ua.west) -- (a.east);
	\draw[thick, ->] (a.south) -- (y.north);
	\draw[thick, ->] (a.south west) -- (yhat.45);
	\draw[thick, ->] (uy.west) -- (y.east);
    \end{tikzpicture}
	\caption{Structural causal model.}
	\label{fig:M}
\end{figure}
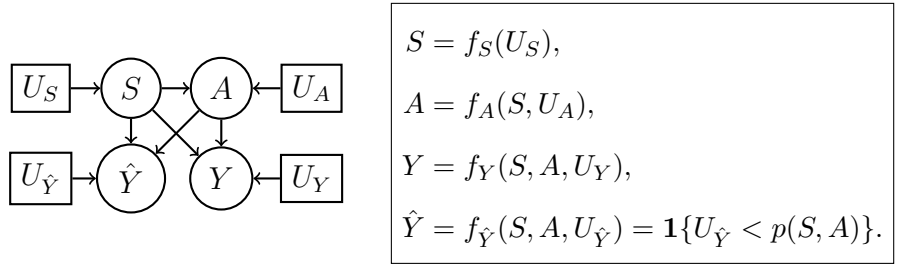

To bring the discussion of fairness into the framework of causal inference, we begin by constructing the Structural Causal Model (SCM) for the data. As described in \cite{pearl2009causality}, an SCM $M$ consists of a set of exogenous variables $U$, a set of endogenous variables $V$, and $F$, a set of functions that assign value to each endogenous variable given its parents in $V$ and the exogenous variables $U$. In our case (Figure \ref{fig:M}), we consider $V=\{S, A, Y, \hat{Y}\}$, where $\{S, A, Y\}$ are the observed data and $\hat{Y}$ is the prediction of $Y$ we made based on $S$ and $A$. The only exogenous variable affecting $\hat{Y}$ is a $\mathrm{Uniform}(0,1)$ random variable $U_{\hat{Y}}$ so that we can conveniently express the value of $\hat{Y}$ with a structural equation. We assume that $U_S$, $U_A$, and $U_Y$, which are the exogenous variables that affect $S$, $A$, and $Y$ respectively, are independent of each other. The structural equations on the right side of Figure \ref{fig:M} are described with the functions in $F$, one for each component in $V$. Here we express $f_{\hat{Y}}$ as an indicator function so that $\hat{Y}$ is a Bernoulli random variable that takes value one with probability $p(S, A)$. In general, $p(s, a)$ could be any function that maps $\mathcal{S}\times\mathcal{A}$ to $[0,1]$, but we are more interested in such functions that will result in a fair decision, more details of which will be discussed in Section \ref{sec:algorithm}. It can be seen that the subset of exogenous variables $\{U_S, U_A, U_Y\}$ characterize everything we should know about a unit. Any two units with the same realization will have the same behavior and result irrespective of the other differences in their identities.
	
Here we give a simplified loan approval model as a running example to help understand the SCM we considered.
	
\begin{example}\label{ex:1}
A bank asks each loan applicant for her/his race $S$ and annual income $A$ to decide to approve the application ($Y=1$) or not ($Y=0$). There are two races in the population of the applicants, $S=1$ represents the advantageous group, and $S=0$ for the disadvantageous one. Let $U_S\sim\mathrm{Uniform}(0,1)$, we generate $S=\mathbf{1}\{U_S < 0.7\}$. The annual income is log-normally distributed for each race group and its scale and location parameters may depend on race: 
$$
A = c_1\exp\{c_2 + \lambda_a S + c_3 \sigma_a^S U_A\},
$$
where $U_A$ is a standard normal random variable, $c_1, c_3>0$, and $c_2$ are constants that affect the median and spread of the population income, $\lambda_a$ decides the difference in mean log income between the two race groups, and $\sigma_a>0$ determines the standard deviation ratio of the log incomes. The decision by the bank can be simulated from a logistic model:
$$
Y=\mathbf{1}\{U_Y < \mathrm{expit}(\beta_0 + \beta_a A + \beta_s S)\},
$$
where $U_Y\sim\mathrm{Uniform}(0,1)$ and $\mathrm{expit}(u) = (1 + e^{-u})^{-1}$.
\end{example}
	
In this example, $\beta_s$ characterizes the direct effect of the sensitive attribute on the decision: when $\beta_s>0$, the applications from the advantageous group are more likely to be approved by the bank when holding the income fixed. On the other hand, $\lambda_a$ partly describes the indirect effect because when both $\lambda_a$ and $\beta_a$ are positive, the advantageous group will have a higher income than the other group on average and thus be favored by the bank even if $\beta_s=0$. It is worth noting that, apart from the difference in the mean, the difference in higher moments could also cause unfairness indirectly as alluded to in \citet{fuster2020predictably}. In general, if there are any differences in the distribution of $A$ across the categories in $\mathcal{S}$, the decision based on $A$ might be unfair. However, the indirect effect caused by the differences in the higher moments of $A$ could be case dependent and thus harder to interpret. In our case, $\sigma_a>1$ will lead to a higher average income and hence higher approval probability on average for the advantageous group since the income distribution is right-skewed.
	
With the SCM in hand, we are ready to define the causal quantity we are interested in. Since most sensitive attributes, such as gender and race, cannot be altered in experiments, we will look into the counterfactuals, namely, what the results $Y$ would be had $S$ been different from the observed facts. This quantity is expressed as $Y_{s}(U)$ had $S$ been $s$ for a random unit with exogenous variables $U$ sampled from the population. Define $M_s$ to be the modified SCM from $M$ (Figure \ref{fig:M}) with the equation for $S$ replaced with $S=s$. Then for any realization $U=u$, the unit level counterfactuals $Y_{s}(u)$ can be calculated from $M_s$. Similarly, we can define $\hat{Y}_{s}(U)$ and $\hat{Y}_{s}(u)$ as the counterfactual predicted decision and its realization. The counterfactual fairness can then be defined on both the decision and the prediction based on the counterfactual result. Here we denote $\mathcal{Y}$ as a placeholder for either $Y$ or $\hat{Y}$.
	
\begin{definition}{Counterfactual Fairness.} Given a new pair of attributes $(s^*, a^*)$, a (predicted) decision $\mathcal{Y}$ is counterfactually fair if for any $s' \in \mathcal{S}$,
\begin{equation*}
	\mathcal{Y}_{s'}(U)|\{S = s^*, A = a^*\} \overset{d}{=} \mathcal{Y}_{s^*}(U)|\{S = s^*, A = a^*\}.
\end{equation*}
\end{definition}
	
In other words, the conditional distribution of the counterfactual result should not depend on the sensitive attributes. It should be noted that there are two stages in evaluating the conditional counterfactuals. The first is updating the conditional distribution of $U$. Take the decision $Y$ from Example \ref{ex:1}, if $s^* = 0$, then $U_S|\{S=s^*, A=a^*\}$ is from $\mathrm{Uniform}(0.7, 1)$ and $U_A|\{S=s^*, A=a^*\}$ is a constant $(\log(a^*/c_1) - c_2)/c_3$, but $U_Y|\{S=s^*, A=a^*\}$ is still a $\mathrm{Uniform}(0, 1)$ random variable since $U_Y$ is independent of $S$ and $A$ from the SCM. The next stage is deriving the conditional distribution of the counterfactuals from the structural equations of $M_s$ and the conditional distribution of $U$. Continuing with our example, $Y_1(U)|\{S=0, A=a^*\}$ would be equal in distribution to
\begin{align*}
&f_Y(1, f_A(1, U_A), U_Y)|\{S=0, A=a^*\}\\
\overset{d}{=}&f_Y(1, f_A(1, (\log(a^*/c_1) - c_2)/c_3), U_Y)\\
\overset{d}{=}&\mathbf{1}\{U_Y < \mathrm{expit}(\beta_0 + \beta_a c_1(a^*/c_1)^{\sigma_a}e^{\lambda_a+(1-\sigma_a)c_2} + \beta_s)\}
\end{align*}
and 
$Y_0(U)|\{S=0, A=a^*\}\overset{d}{=}\mathbf{1}\{U_Y < \mathrm{expit}(\beta_0 + \beta_a a^*)\}$. Thus the bank's decision $Y$ would be counterfactually fair if $\sigma_a=1$, $\lambda_a=0$ and $\beta_s=0$.

\section{Preprocessing, Learning, and Testing}\label{sec:algorithm}
Define a preprocessing procedure $\mathcal{P}^{\mathcal{D}}(s, a):\mathcal{S}\times\mathcal{A}\to\mathcal{A}'$ to be a function that maps attributes $(s, a)$ to the processed attributes $a'$ given the training data $\mathcal{D}$. Here we consider two such procedures. Denote $\p_n(S = s)$ as the empirical p.m.f.~of $S$ and $\E_n(A | S = s)$ as the empirical conditional mean of $A$ given $S$ learned from data $\mathcal{D}$.
\begin{definition}[Orthogonalization]\label{def:orth}
An orthogonalization procedure $\mathcal{P}_{O}^{\mathcal{D}}$ is a preprocessing procedure such that
\begin{equation*}
    \mathcal{P}_{O}^{\mathcal{D}}(s^*, a^*) = \sum_s \hat{a}(s) \p_n(S = s),
\end{equation*}
where $\hat{a}(s) = a^* - \E_n(A | S = s^*) + \E_n(A | S = s), \forall s \in \mathcal{S}$.
\end{definition}
It is easy to see that $\mathcal{P}_{O}^{\mathcal{D}}(s^*, a^*) = a^* - \E_n(A | S = s^*) + \E_n(A)$ is a one-to-one function of $a^*$ for any fixed $s^*$. Denote $\hat{F}_{js}(x) = \p_n(A_j \le x | S=s)$ as the empirical marginal cumulative distribution function (CDF) of the $j$th element of the non-sensitive attributes given the sensitive attribute $S=s$. Define its inverse as
\begin{equation}\label{eq:invcdf}
	\hat{F}_{js}^{-1}(z) = \inf\{x: \p_n(A_j \le x | S=s)\ge z\}.
\end{equation}
\begin{definition}[Marginal Distribution Mapping]\label{def:margin}
A marginal distribution mapping $\mathcal{P}_{M}^{\mathcal{D}}$ is a preprocessing procedure such that
\begin{equation*}
	\mathcal{P}_{M}^{\mathcal{D}}(s^*, a^*) = \sum_s \hat{a}(s) \p_n(S = s),
\end{equation*}
where the $j$th element of $\hat{a}(s)$ is $[\hat{a}(s)]_j = \hat{F}_{js}^{-1}(\hat{F}_{js^*}([a^*]_j)$ for $j = 1, \cdots, d$.
\end{definition}
	
Let $\mathcal{P}$, $\mathcal{P}_{O}$, and $\mathcal{P}_{M}$ denote the population level preprocessing procedure corresponding to $\mathcal{P}^{\mathcal{D}}$, $\mathcal{P}_{O}^{\mathcal{D}}$, and $\mathcal{P}_{M}^{\mathcal{D}}$, respectively. It is obvious that $\mathcal{P}_{O}(s^*, a^*) = a^* - \E(A | S = s^*) + \E(A)$ is still a one-to-one function of $a^*$ for any fixed $s^*$, and the $j$th element of $\mathcal{P}_{M}(s^*, a^*)$ is
\begin{equation*}
	[\mathcal{P}_{M}(s^*, a^*)]_j = \sum_{s} F_{js}^{-1}(F_{js^*}([a^*]_j) \p(S = s),
\end{equation*}
where $F_{js}$ is the marginal CDF of the $j$th element of $A$ given $S=s$ and $F_{js}^{-1}$ is defined similarly to (\ref{eq:invcdf}) but replacing $\p_n$ with $\p$. It can be seen that if $A_j$ is a discrete variable, then $F_{js}^{-1}(F_{js^*}(x))$ is strictly increasing for $s=s^*$; and if $A_j$ is a continuous variable, then $F_{js}^{-1}(F_{js^*}(x))$ may not be strictly increasing when $F_{js^*}(x)$ is constant on some interval of $x$. Therefore $\mathcal{P}_{M}(s^*, a^*)$ is only a one-to-one function of $a^*$ for any fixed $s^*$ when the marginal CDF of each continuous element in $A$ given $S=s^*$ is strictly increasing.
	
\subsection{Fair Learning Algorithm}
Besides preprocessing procedures, we also have different choices of learners. A Fairness-Through-Unawareness (FTU) predictor $f_{FTU}(a)$ only uses the non-sensitive attributes $A$ to predict the conditional mean of $Y$. A Machine Learning predictor $f_{ML}(s, a)$ uses both the sensitive and non-sensitive attributes to predict $\E(Y|S, A)$. An Averaged Machine Learning (AML) predictor $f_{AML}(a) = \sum_s f_{ML}(s, a) \p_n(S=s) ds$. Note that we still need to train the ML predictor to obtain the AML predictor, but it only needs the non-sensitive attributes as its input when making a prediction since the sensitive attributes are averaged out. Algorithm \ref{alg:1} could use any learner $f \in \{f:\mathcal{A} \to [0,1]\}$ to learn the decisions from the processed data, and we would consider the FTU and AML learners in our numerical studies.

\begin{algorithm}[!htbp]
    \caption{Fair Learning through dAta Preprocessing (FLAP)}\label{alg:1}
	\begin{algorithmic}
	\STATE {\bfseries Input:} Training data $\mathcal{D}$, preprocessing procedure $\mathcal{P}^{\mathcal{D}}$, learner $f$, test attributes $(s, a)$
	\FOR{$(s_i, a_i, y_i)$ {\bfseries in} $\mathcal{D}$}
	\STATE $a_i' = \mathcal{P}^{\mathcal{D}}(s_i, a_i)$
	\ENDFOR
	\STATE Create the processed data $\mathcal{D}' = \{(s_i, a_i', y_i)\}_{i=1}^n$
	\STATE Learn predictor $f$ from $\mathcal{D}'$
	\STATE Calculate $a' = \mathcal{P}^{\mathcal{D}}(s, a)$
	\STATE Draw $\hat{Y}$ from $\mathrm{Bernoulli}(f(a'))$
	\STATE {\bfseries Output:} $\hat{Y}$
	\end{algorithmic}
\end{algorithm}

Apart from the structural assumptions made in Figure \ref{fig:M}, extra conditions of the structural equation $f_A(s, u_A)$ must be satisfied for the preprocessing method to work.
	
\begin{assumption}[Strong non-sensitive]\label{as:strong_nos}
	The partial derivative $\frac{\partial}{\partial u_A}f_A(s, u_A)$ does not involve $s$.
\end{assumption}
	
\begin{assumption}[Weak non-sensitive]\label{as:weak_nos}
	The sign of $\frac{\partial}{\partial u_A}f_{A_j}(s, u_A)$ does not change with $s$ for all $u_A$ and all $j = 1, \cdots, d$.
\end{assumption}
These two conditions describe the relationship between the sensitive and non-sensitive attributes. Condition \ref{as:weak_nos} is weaker than Condition \ref{as:strong_nos}. %for that the signs of the elements of $\frac{\partial}{\partial u_A}f_A(s, u_A)$ will not change with $s$ if it does not involve $s$. 
For example, an additive model $f_A(s, u_A) = \beta_0+\beta_1 s + \beta_2 u_A$ satisfies both conditions, while an interaction model $f_A(s, u_A) = \beta_0+\beta_1 s + \beta_2 u_A + \beta_3 s u_A$ does not satisfy Condition \ref{as:strong_nos} but will satisfy Condition \ref{as:weak_nos} if $\beta_2 + \beta_3 s$ is greater than (or less than, or equal to) zero for all $s$. In our running example, 
$\frac{\partial}{\partial u_A}f_{A}(s, u_A) = c_1c_3 \sigma_a^s\exp\{c_2 + \lambda_a s + c_3 \sigma_a^s u_A\}>0$
for $s=0, 1$. So it meets Condition \ref{as:weak_nos} but not Condition \ref{as:strong_nos}. We prove in the following theorem that these conditions, together with the SCM, are sufficient for Algorithm \ref{alg:1} to generate counterfactually fair decisions.
	
\begin{theorem}{}\label{thm:1}
    Given an SCM $M=(U, V, F)$ with structural equations defined in Figure \ref{fig:M} and let $\hat{Y}$ be the output from Algorithm \ref{alg:1}, i.e., $\mathbf{1}\{U_{\hat{Y}}<f(\mathcal{P}^{\mathcal{D}}(S, A))\}$.
	\begin{enumerate}
		\item If the procedure $\mathcal{P}_{O}^{\mathcal{D}}$ is adopted, $\hat{Y}$ is counterfactually fair under Condition \ref{as:strong_nos}.
		\item If the procedure $\mathcal{P}_{M}^{\mathcal{D}}$ %or $\mathcal{P}_{J}^{\mathcal{D}}$ 
		is adopted, $\hat{Y}$ is counterfactually fair under Condition \ref{as:weak_nos}.
	\end{enumerate}
\end{theorem}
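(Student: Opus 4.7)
The plan is to reduce counterfactual fairness of $\hat{Y}$ to the statement that the preprocessed counterfactual attribute $\mathcal{P}^{\mathcal{D}}(s', f_A(s', U_A))$ has a conditional distribution given $\{S = s^*, A = a^*\}$ that does not depend on $s'$, and then to verify this reduced statement under each of the two conditions. The reduction exploits that $U_{\hat{Y}} \sim \mathrm{Uniform}(0,1)$ is independent of $(U_S, U_A, U_Y)$: the conditional law of
\begin{equation*}
\hat{Y}_{s'}(U) = \mathbf{1}\{U_{\hat{Y}} < f(\mathcal{P}^{\mathcal{D}}(s', f_A(s', U_A)))\}
\end{equation*}
given $\{S = s^*, A = a^*\}$ is $\mathrm{Bernoulli}(f(\mathcal{P}^{\mathcal{D}}(s', f_A(s', u_0))))$, where $u_0$ is the (a.s.\ unique) solution of $f_A(s^*, u_0) = a^*$; Conditions \ref{as:strong_nos} and \ref{as:weak_nos} each supply the injectivity of $f_A(s^*, \cdot)$ needed to identify $u_0$. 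Since the learner $f$ is arbitrary, fairness reduces to showing that $\mathcal{P}^{\mathcal{D}}(s', f_A(s', u_0))$ is free of $s'$.

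For Part (1), I would integrate the hypothesis of Condition \ref{as:strong_nos} to obtain the additive decomposition $f_A(s, u) = g(s) + h(u)$; combined with $U_A \perp S$ this gives $\E(A \mid S = s') = g(s') + \E h(U_A)$, so at the population level
\begin{equation*}
\mathcal{P}_{O}(s', f_A(s', u_0)) = [g(s') + h(u_0)] - [g(s') + \E h(U_A)] + \E(A) = h(u_0) - \E h(U_A) + \E(A),
\end{equation*}
in which $s'$ cancels algebraically. For Part (2), Condition \ref{as:weak_nos} makes each $f_{A_j}(s, \cdot)$ monotone in a direction that does not depend on $s$, which yields the identity $F_{js}^{-1}(F_{js^*}(x)) = f_{A_j}(s, f_{A_j}^{-1}(s^*, x))$ for almost every $x$ on the support of $A_j \mid S = s^*$. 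Plugging in $x = [f_A(s', u_0)]_j$, whose image under $F_{js'}$ is $\p(U_A \le u_0)$ independently of $s'$, gives
\begin{equation*}
[\mathcal{P}_{M}(s', f_A(s', u_0))]_j = \sum_s F_{js}^{-1}\bigl(F_{js'}([f_A(s', u_0)]_j)\bigr)\p(S = s) = \sum_s f_{A_j}(s, u_0)\,\p(S = s),
\end{equation*}
again free of $s'$.

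The two obstacles I foresee are, first, the finite-sample gap between $\mathcal{P}^{\mathcal{D}}$ and its population limit $\mathcal{P}$: the cancellations above hold exactly only with the true means and CDFs, so I plan to phrase the conclusion at the population level and transfer it to the empirical preprocessing asymptotically via the law of large numbers and the Glivenko--Cantelli theorem; and, second, the case where the conditional CDF $F_{js^*}$ of a continuous component of $A$ is not strictly increasing, for which the identity $F_{js}^{-1} \circ F_{js^*}(x) = f_{A_j}(s, f_{A_j}^{-1}(s^*, x))$ must be verified via the generalized inverse in (\ref{eq:invcdf}) and only $F_{js^*}$-almost everywhere---this is still sufficient because counterfactual fairness is itself a distributional statement. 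Part~(2) will be the harder of the two; Part~(1) becomes essentially a direct calculation once the additive form is in hand.
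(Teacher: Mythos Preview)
Your reduction is exactly the paper's: evaluate $\E(\hat{Y}_{s'}(U)\mid S=s^*,A=a^*)$ by abduction--action--prediction and argue that it suffices to show $\mathcal{P}^{\mathcal{D}}(s',f_A(s',u))$ is free of $s'$. Where you diverge is in how you verify this key fact. For Part~(1) you integrate Condition~\ref{as:strong_nos} to obtain the exact additive form $f_A(s,u)=g(s)+h(u)$ and cancel $g(s')$ algebraically; the paper instead writes a first-order Taylor expansion of $\E(A\mid S=s')$ about $u$ and invokes Condition~\ref{as:strong_nos} to replace the derivative evaluated at $s'$ by the one at $s^*$. Your route is tighter, since under Condition~\ref{as:strong_nos} the additive form holds exactly and no expansion is needed. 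For Part~(2) you derive the identity $F_{js}^{-1}\circ F_{js'}(x)=f_{A_j}(s,f_{A_j}^{-1}(s',x))$ from monotonicity and read off the result; the paper again Taylor-expands $f_{A_j}(s',U_A)$ inside $\hat{F}_{js'}$ and uses Condition~\ref{as:weak_nos} to swap the sign of the derivative to one independent of $s'$. Your argument is more transparent but relies on $f_{A_j}(s,\cdot)$ being invertible (i.e., strictly monotone), whereas the paper's sign argument nominally covers the degenerate case where the derivative vanishes.

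Two small points. First, you assert that Conditions~\ref{as:strong_nos} and~\ref{as:weak_nos} ``each supply the injectivity of $f_A(s^*,\cdot)$''; neither does so directly (Condition~\ref{as:strong_nos} fixes the derivative across $s$ but does not force it nonzero, and Condition~\ref{as:weak_nos} fixes only the sign, which could be zero). The paper sidesteps this by integrating over the posterior $p_{U_A\mid S,A}$ rather than pinning down a unique $u_0$; you should do the same, since your cancellation $\mathcal{P}(s',f_A(s',u))=\text{const}(u)$ holds for every $u$, not just a distinguished $u_0$. Second, both your plan and the paper's proof handle the empirical-versus-population gap only asymptotically (the paper via $o_{\p}$ remainders, you via Glivenko--Cantelli), so the conclusion in either case is really population-level counterfactual fairness up to vanishing error; this is worth stating explicitly.
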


\begin{proof}
We prove the theorem for a general class of learners $\{f:\mathcal{A} \to [0,1]\}$ that only take the non-sensitive attribute $a$ as the input. Clearly, both $f_{FTU}$ or $f_{AML}$ belong to this class. We follow the Abduction-Action-Prediction steps in Theorem 7.1.7 \cite{pearl2009causality} to evaluate the conditional expectation of $\hat{Y}_{s'}(U)$ given the evidence $S = s^*, A = a^*$,
\begin{equation*}
    \E(\hat{Y}_{s'}(U)|S = s^*, A = a^*)=\int f(\mathcal{P}^{\mathcal{D}}(s', f_A(s', u)))p_{U_A|S,A}(u|S = s^*, A = a^*)du,
\end{equation*}
where $p_{U_A|S,A}(u|s^*, a^*)$ denotes the conditional density of $U_A$ given $S=s^*$ and $A=a^*$. If $\mathcal{P}^{\mathcal{D}}(s', f_A(s', u))$ does not depend on $s'$, so will $\E(\hat{Y}_{s'}(U)|S = s^*, A = a^*)$ and we will have
\begin{equation*}
    \E(\hat{Y}_{s'}(U)|S = s^*, A = a^*) = \E(\hat{Y}_{s^*}(U)|S = s^*, A = a^*).
\end{equation*}
Note that $\mathcal{P}^{\mathcal{D}}(s', f_A(s', u)) = \sum_s \hat{a}(s) \p_n(S=s)$ for both the preprocessing procedures we are considering. Therefore, it suffices to show that $\hat{a}(s)$ does not depend on $s'$.

First, consider the Orthogonalization procedure $\mathcal{P}_{O}^{\mathcal{D}}$ where 
\begin{align*}
\hat{a}(s) =& f_A(s', u) - \E_n(A | S = s') + \E_n(A | S = s)\\
=& f_A(s', u) - \E(A | S = s') + \E_n(A | S = s) - (\E_n - \E)(A | S = s').
\end{align*}
Note that $A | \{S = s'\} =  f_A(s', U_A)$ and the first order Taylor expansion of $\E(A | S = s')$ is
\begin{equation*}
    \E\left(f_A(s', u) + \frac{\partial}{\partial u}f_A(s, u)\bigg|_{s = s', u = u'}(U_A - u)\right) = f_A(s', u) + \frac{\partial}{\partial u}f_A(s, u)\bigg|_{s = s', u = u'} \E(U_A - u)
\end{equation*}
for some $u'$ between $u$ and $U_A$. By Condition 1
\begin{equation*}
    \hat{a}(s) = \frac{\partial}{\partial u}f_A(s, u)\bigg|_{s = s^*, u = u'} \E(u - U_A) + \E_n(A | S = s) + o_{\p}(n)
\end{equation*}
and thus it does not depend on $s'$.

Second, consider the Marginal Distribution Mapping procedure $\mathcal{P}_{M}^{\mathcal{D}}$. Let $f_{A_j}(s, u) = e_j^Tf_{A}(s, u)$ where $e_j$ is a $d$-dimensional vector with the $j$th element being one and all other elements being zeros. The $j$th element of $\hat{a}(s)$ is $[\hat{a}(s)]_j = \hat{F}_{js}^{-1}(\hat{F}_{js'}(f_{A_j}(s', u)))$ for $j = 1, \cdots, d$. Again, the first order Taylor expansion of $f_{A_j}(s', U_A)$ gives
\begin{align*}
    \hat{F}_{js'}(f_{A_j}(s', u))
    =& \p_n(A_j \le f_{A_j}(s', u) | S = s')\\
    =& \p(f_{A_j}(s', U_A) \le f_{A_j}(s', u))
    + (\p_n - \p)(A_j \le f_{A_j}(s', u) | S = s')\\
    =& \p\bigg(f_{A_j}(s', u)
    + \frac{\partial}{\partial u}f_{A_j}(s, u)\bigg|_{s = s', u = u'}(U_A - u) < f_{A_j}(s', u)\bigg)
    + o_{\p}(n)
\end{align*}
for some $u'$ between $u$ and $U_A$. Under Condition 2,
\begin{equation*}
    \hat{F}_{js'}(f_{A_j}(s', u))
    = \p\left(\mathrm{sign}\left(\frac{\partial}{\partial u}f_{A_j}(s, u)\bigg|_{s = s^*, u = u'}\right)(U_A - u) < 0\right)
    + o_{\p}(n)
\end{equation*}
does not depend on $s'$ and hence $a(s)$ is a function of $s$ and $u$ alone.  
\end{proof}

The intuition is that the FLAP algorithm learns the decision from processed data only, and the processed data contain no sensitive information since the preprocessing procedure can remove $A$'s dependence on $S$ under the non-sensitive condition.

Theorem \ref{thm:1} identifies the conditions for achieving counterfactual fair decisions under a certain SCM. However, the SCM is often unidentifiable given the observational data, i.e., two different SCMs could generate the data with the same joint distribution. For example, let $h$ be a one-to-one function and $U_A' = h(U_A)$. Since we cannot identify if $U_A$ or $U_A'$ are the true exogenous variables, the structural equation of $A$ can be either $f_A(S, U_A)$ or $f_A'(S, U_A') = f_A(S, h^{-1}(U_A'))$. If we are given the freedom to reparameterize the SCM, which is true in most real-world applications, it will be much easier to achieve counterfactual fairness. For illustration, consider the case where $A$ is a continuous random variable, let $U_A'$ be a uniform random variable and $f_A'$ be the inverse conditional CDF of $A$ given $S$. Condition \ref{as:weak_nos} will hold true if $f_A'$, or equivalently, the conditional CDF of $A$ given $S$ is strictly increasing. Then we can apply the Marginal Distribution Mapping procedure to learn counterfactually fair decisions. It is easy to check that this reparameterization trick also works for discrete random variables and random vectors where each element is either continuous or discrete. The only exception where Condition \ref{as:weak_nos} does not hold even with reparameterization is when $A$ contains mixtures of continuous and discrete random variables. One example is test score where the distribution below the maximum score is continuous but there is also a positive probability of getting 100. In this sense, we can learn counterfactually fair decisions for most common types of non-sensitive attributes using the Marginal Distribution Mapping preprocessing procedure.

\subsection{Test for Counterfactual Fairness}
Data preprocessing not only allows us to learn a counterfactually fair decision but also enables us to test if the decisions made in the original data are fair. When Condition \ref{as:strong_nos} holds, we can use the data processed by the orthogonalization procedure to test fairness. When the strong condition does not hold but Condition \ref{as:weak_nos} is satisfied, we need an extra condition to utilize the marginal distribution mapping procedure for fairness testing.

\begin{assumption}\label{as:non_constant_cdf}
	The conditional marginal CDF $F_{js}(x)$ is strictly increasing for all such $j$ that $A_j$ is continuous and all $s\in\mathcal{S}$.
\end{assumption}

In other words, each non-sensitive attributes $A_{j}$ should be either a discrete random variable or a continuous one with non-zero density on $\R$. This condition ensures that $\mathcal{P}_{M}(s^*, a^*)$ is a one-to-one function as discussed earlier. With these conditions, we can establish the equivalence between CF and the conditional independence of decision and sensitive information given the processed non-sensitive information.
	
\begin{theorem}{}\label{thm:2}
Consider the original decision $Y$:
	\begin{enumerate}
		\item Under Condition \ref{as:strong_nos}, $Y$ is counterfactually fair if and only if $Y\bot S | \mathcal{P}_O(S, A)$.
		\item Under Conditions \ref{as:weak_nos} and \ref{as:non_constant_cdf}, $Y$ is counterfactually fair if and only if $Y\bot S | \mathcal{P}_M(S, A)$.
	\end{enumerate}
\end{theorem}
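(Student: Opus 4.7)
The plan is to leverage a structural consequence of Theorem \ref{thm:1} at the population level: under Condition \ref{as:strong_nos} (for part 1) or Condition \ref{as:weak_nos} (for part 2), the processed attribute $\tilde{A} := \mathcal{P}(S, A)$ is a measurable function of $U_A$ alone, say $\tilde{A} = g(U_A)$, with $g$ not depending on $S$. First, I would verify this by repeating the argument in the proof of Theorem \ref{thm:1} at the population level, where no Taylor remainders need tracking: under Condition \ref{as:strong_nos} the partial derivative condition implies $f_A(s, u) = \tilde{f}(s) + \phi(u)$, so $\mathcal{P}_O(s, f_A(s, u)) = \phi(u) - \E\phi(U_A) + \E A$, and under Condition \ref{as:weak_nos} an analogous coordinate-wise argument shows $\mathcal{P}_M(s, f_A(s, u))$ is likewise independent of $s$. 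Second, I would record that $\mathcal{P}(s, \cdot)$ is injective in its second argument for each fixed $s$: for $\mathcal{P}_O$ this is immediate since it is a translation in $a$, while for $\mathcal{P}_M$ this is exactly what Condition \ref{as:non_constant_cdf} supplies, as discussed after Definition \ref{def:margin}. Let $a(s, \tilde{a})$ denote the resulting inverse; then $(s, a) \leftrightarrow (s, \tilde{a})$ is a bijection, and in particular the conditioning events $\{S = s^*, A = a^*\}$ and $\{S = s^*, \tilde{A} = \mathcal{P}(s^*, a^*)\}$ coincide.

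With this setup in hand, the proof reduces both sides of the equivalence to a single scalar identity. Set $h(s, a) := \E f_Y(s, a, U_Y)$, which is well-defined because $U_Y$ is independent of $(U_S, U_A)$ in the SCM. I would show that each of CF and the conditional independence $Y \perp S \mid \tilde{A}$ is equivalent to
\begin{equation*}
    h(s, a(s, \tilde{a})) \text{ being constant in } s \text{ for every } \tilde{a} \text{ in the support of } \tilde{A}.
\end{equation*}
The conditional-independence side is immediate: $\E(Y \mid S = s, \tilde{A} = \tilde{a}) = \E(Y \mid S = s, A = a(s, \tilde{a})) = h(s, a(s, \tilde{a}))$ using the bijection and the independence of $U_Y$, and since $Y$ is binary, equality of conditional means is equivalent to equality in distribution. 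For the CF side, I would follow the Abduction--Action--Prediction template. The Abduction observation is that $\tilde{A} = g(U_A)$ together with $U_A \perp U_S$ implies $U_A \mid \{S = s^*, A = a^*\} \overset{d}{=} U_A \mid \{\tilde{A} = \tilde{a}\}$, where $\tilde{a} = \mathcal{P}(s^*, a^*)$. After the Action step substitutes $S = s'$, injectivity of $\mathcal{P}(s', \cdot)$ forces $f_A(s', U_A) = a(s', \tilde{a})$ on the event $\{g(U_A) = \tilde{a}\}$. The Prediction step then integrates over $U_Y$ (still independent by the SCM) to give $\E(Y_{s'}(U) \mid S = s^*, A = a^*) = h(s', a(s', \tilde{a}))$. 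Requiring this to equal the analogous quantity at $s^*$ (for CF), or requiring $\E(Y \mid S = s, \tilde{A} = \tilde{a})$ to be constant in $s$ (for conditional independence), yields one and the same condition on $h$.

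I expect the main obstacle to be the Abduction step and the clean identification $U_A \mid \{S = s^*, A = a^*\} \overset{d}{=} U_A \mid \{\tilde{A} = \tilde{a}\}$. Handling this carefully requires combining three ingredients: the independence of $U_A$ from $U_S$, so that conditioning on $S$ does not alter the law of $U_A$; the fact that $\tilde{A}$ is a function of $U_A$ alone, so that no information about $U_S$ is hidden inside $\tilde{A}$; and the bijection between $(s, a)$ and $(s, \tilde{a})$, so that the two conditioning events literally agree. A secondary subtlety, specific to part 2, is justifying the population-level analog of Theorem \ref{thm:1} for $\mathcal{P}_M$ in the multivariate case under Condition \ref{as:weak_nos}, and verifying that it is precisely Condition \ref{as:non_constant_cdf} that upgrades the coordinate-wise monotonicity into strict monotonicity and hence $\mathcal{P}_M(s, \cdot)$ into a genuine bijection.
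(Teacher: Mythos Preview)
Your proposal is correct and rests on the same two structural facts the paper uses: that $\mathcal{P}(s, f_A(s, u))$ does not depend on $s$ (the population-level content of Theorem~\ref{thm:1}) and that $\mathcal{P}(s,\cdot)$ is a bijection in its second argument (immediate for $\mathcal{P}_O$, supplied by Condition~\ref{as:non_constant_cdf} for $\mathcal{P}_M$). The organization differs, however. The paper proves the two implications separately: for $Y\perp S\mid\mathcal{P}(S,A)\Rightarrow$ CF it carries the integral $\int\E(Y\mid S=s',A=f_A(s',u))\,p_{U_A\mid S,A}(u\mid s^*,a^*)\,du$ and manipulates the integrand via injectivity and conditional independence; for CF $\Rightarrow Y\perp S\mid\mathcal{P}(S,A)$ it invokes Bayes' formula to show the posterior of $U_A$ given $(S,\mathcal{P}(S,A))=(s,a')$ does not depend on $s$. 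Your route is more economical: by observing that on $\{g(U_A)=\tilde a\}$ the quantity $f_A(s',U_A)$ is \emph{deterministically} equal to $a(s',\tilde a)$, you collapse the integral entirely and reduce both directions to the single scalar criterion that $h(s,a(s,\tilde a))=\E f_Y(s,a(s,\tilde a),U_Y)$ be constant in $s$. This avoids the Bayes-formula computation altogether and makes the symmetry between the two directions explicit; the paper's version, while equivalent, obscures this unification by treating the directions with different machinery.
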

	
\begin{proof}
The steps of proving the two statements of Theorem 2 are similar. To remove redundancy, we use the notation $\mathcal{P}$ whenever the argument is true for both the preprocessing procedures $\mathcal{P}_O$ and $\mathcal{P}_M$.

First we show that $Y$ is counterfactually fair if $Y\bot S | \mathcal{P}(S, A)$. The posterior mean of the counterfactual $Y_{s'}(U)$ given $S=s^*$ and $A=a^*$ can be evaluated in two steps: first find the conditional distribution of $U=\{U_A, U_S, U_Y\}$, and then calculate the conditional expectation of the counterfactuals from the SCM. Since the effect of $U_S$ is blocked by setting $S=s'$ and $U_Y$ is independent of $S$ and $A$, only the distribution of $U_A$ will be affected by the given information and effect the counterfactuals $Y_{s'}(U)$.
\begin{equation}\label{eq:fu}
    \E(Y_{s'}(U)|S=s^*, A=a^*)
    =\int \E(f_Y(s', f_A(s', u), U_Y) p_{U_A|S,A}(u|S=s^*, A=a^*)du.
\end{equation}
Under the SCM, $\E(f_Y(s', f_A(s', u), U_Y)$ is the same as the expectation of the observed decision $Y$ given the attributes $S=s', A=f_A(s',u)$. Therefore (\ref{eq:fu}) is equal to
\begin{align}
    & \int \E(Y | S = s', A=f_A(s',u)) p_{U_A|S,A}(u|S=s^*, A=a^*)du\label{eq:u2a}\\
    =& \int \E(Y | S = s', \mathcal{P}(S, A)=\mathcal{P}(s', f_A(s',u))) p_{U_A|S,A}(u|S=s^*, A=a^*)du\label{eq:a2psa}\\
    =& \int \E(Y |\mathcal{P}(S, A)=\mathcal{P}(s', f_A(s',u))) p_{U_A|S,A}(u|S=s^*, A=a^*)du\label{eq:condind}\\
    =& \int \E(Y |\mathcal{P}(S, A)=\mathcal{P}(s^*, f_A(s^*,u))) p_{U_A|S,A}(u|S=s^*, A=a^*)du\label{eq:pd}\\
    =&\E(Y_{s^*}(U)|S=s^*, A=a^*).
\end{align}
Equation (\ref{eq:a2psa}) replaces the condition $A=f_A(s',u)$ with $\mathcal{P}(S, A)=\mathcal{P}(s', f_A(s',u))$ because $\mathcal{P}_{O}(S, A)$ is a one-to-one function of $A$ given $S$ and $\mathcal{P}_{M}(S, A)$ is also a one-to-one function of $A$ given $S$ under Condition 3. Equation (\ref{eq:condind}) is due to the conditional independence of $Y$ and $S$, and (\ref{eq:pd}) uses the result that $\mathcal{P}(s', f_A(s', u)) = \mathcal{P}(s^*, f_A(s^*, u))$, which can be shown following the proof of Theorem 1. 
Repeat the steps (\ref{eq:fu}) to (\ref{eq:pd}) and we shall get the same result for $\E(Y_{s'}(U)|S=s^*, A=a^*)$. Note that both $Y_{s'}(U)$ and $Y_{s^*}(U)$ are binary random variables, therefore the equivalence in expectation implies that
\begin{equation*}
    Y_{s'}(U)|\{S = s^*, A = a^*\} \overset{d}{=} Y_{s^*}(U)|\{S = s^*, A = a^*\}.
\end{equation*}
The above result holds for any $s', s^* \in \mathcal{S}$, so the definition of counterfactual fairness is satisfied.

Next we show that $Y\bot S | \mathcal{P}(S, A)$ if $Y$ is counterfactually fair. The counterfactual fairness of $Y$ implies
\begin{equation}\label{eq:cfeq}
   \E[f_Y(s', f_A(s', U_A), U_Y) | S=s^*, A=a^*] = \E[f_Y(s^*, f_A(s^*, U_A), U_Y) | S=s^*, A=a^*]. 
\end{equation}
Let $a' = \mathcal{P}(s^*, a^*)$, then
\begin{equation}\label{eq:u1}
    (U_A, U_Y) | \{S=s^*, A=a^*\}
    \overset{d}{=} (U_A, U_Y) | \{S=s^*, \mathcal{P}(s^*, A)=a'\}
\end{equation}
since $\mathcal{P}(s^*, a^*)$ is a one-to-one function of $a^*$ for each $s^*$.

Using the Bayesian formula, the posterior density of $U_A$ is
\begin{equation}
    p_{U_A|S,\mathcal{P}}(u|S=s^*, \mathcal{P}(s^*, A)=a')
    =\frac{p_{U_A}(u)\p(S=s^*)p_{\mathcal{P}|S,U_A}(a'|S=s^*, U_A=u)}{\p(S=s^*)\int p_{U_A}(u)p_{\mathcal{P}|S,U_A}(a'|S=s^*, U_A=u)du},\label{eq:bayes}
\end{equation}
where $p_{U_A|S,\mathcal{P}}(u|s^*, a')$ denotes the conditional density of $U_A$ given $S=s^*$ and $\mathcal{P}(s^*, A)=a'$, $p_{U_A}(u)$ denotes the prior density of $U_A$, and $p_{\mathcal{P}|S,U_A}(a'|S=s^*, U_A=u)$ denotes the conditional density of $\mathcal{P}(s^*, A)$ given $S=s^*$ and $U_A=u$. As a density function, (\ref{eq:bayes}) is proportional to its kernel $p_{U_A}(u)p_{\mathcal{P}|S,U_A}(a'|S=s^*, U_A=u)$, which equals $p_{U_A}(u)p_{\mathcal{P}|S,U_A}(a'|S=s', U_A=u)$ because $\mathcal{P}(S, A)$ does not depend on $S$ when $U_A$ is given as shown in the proof of Theorem 1. Repeating the steps and we can show that the posterior density of $U_A$ given $S=s', \mathcal{P}(s', A)=a'$ is also proportional to $p_{U_A}(u)p_{\mathcal{P}|S,U_A}(a'|S=s', U_A=u)$. Together with the assumption in the SCM that $U_Y$ is independent of $S, \mathcal{P}(S, A)$, we have
\begin{equation}\label{eq:u2}
(U_A, U_Y) | \{S=s', \mathcal{P}(s', A)=a'\} \overset{d}{=} (U_A, U_Y) | \{S=s^*, \mathcal{P}(s^*, A)=a'\}.
\end{equation}
The intuition here is that if the processed non-sensitive data are equal, then they provide the same information about $U_A$ regardless of the sensitive information in the original data. Substituting the conditions $\{S=s^*, A=a^*\}$ in (\ref{eq:cfeq}) with the equivalent conditions in (\ref{eq:u1}) and (\ref{eq:u2}) gives
\begin{equation}\label{eq:counterfactualeq}
\begin{split}
    &\E[f_Y(s', f_A(s', U_A), U_Y) | S=s', \mathcal{P}(s', A)=a']\\
    =&\E[f_Y(s^*, f_A(s^*, U_A), U_Y) | S=s^*, \mathcal{P}(s^*, A)=a'].
\end{split}
\end{equation}
Under the SCM and structural equations defined in Figure 1, (\ref{eq:counterfactualeq}) implies
\begin{equation}\label{eq:condeq}
    \E[Y | S=s', \mathcal{P}(S, A)=a'] = \E[Y | S=s^*, \mathcal{P}(S, A)=a'].
\end{equation}
Since (\ref{eq:condeq}) holds for any $s', s^*\in\mathcal{S}$, it yields that
\begin{equation*}
    \E[Y|S, \mathcal{P}(S, A)] = \E[Y|\mathcal{P}(S, A)]
\end{equation*}
and hence $Y\bot S | \mathcal{P}(S, A)$ for binary $Y$.
\end{proof}

Theorem \ref{thm:2} allows us to test CF using any well-established conditional independence test. In practice, given a decision dataset $\mathcal{D} = (s_i, a_i, y_i)_{i=1}^n$, we can obtain the empirical processed non-sensitive attributes $\mathcal{P}^{\mathcal{D}}(s_i, a_i)$ and test if $Y\bot S | \mathcal{P}^{\mathcal{D}}(S, A)$. If the p-value of the test is small enough for us to reject the conditional independence hypothesis, then the original decision is probably biased and algorithms such as FLAP should be used to learn fair decisions.
	
\section{Numerical Studies}
In this section, we compare the decisions made by different algorithms in terms of fairness and accuracy using simulated and real data, and also investigate the empirical performance of the fairness test using simulated data with small sample sizes. We consider three cases for generating the simulation data. The first one is Example \ref{ex:1} and the second one is a multivariate extension of it.% where we introduce one more sensitive group and include the education years of the loan applicants as another non-sensitive attribute and let their annual income depend on it. The third example is a replica of the admission example constructed by \citet{wang2019equal}. The details of these examples and the parameters chosen in the simulation are presented in Section 2 of the supplementary material.

\begin{example}\label{ex:2}
The bank now collects the race $S$, education year $E$ and annual income $A$
information from loan applicants. There are three possible race groups $\mathcal{S} = \{0,1,2\}$ and $S = \mathbf{1}\{U_S > 0.76\} + \mathbf{1}\{U_S > 0.92\}$, meaning that a random applicant could be from the majority race group ($0$) with probability $0.76$, or from the minority group $1$ or $2$ with probability $0.16$ or $0.08$. Let $U_E$ be a standard normal random variable and $\mu_E = \lambda_{e0} + \mathbf{1}\{S = 1\}\lambda_{e1} + \mathbf{1}\{S = 2\}\lambda_{e2}$,  the education year is $E = \max\{0, \mu_E + 0.4\mu_EU_E\}$. Let $\mu_A = \log(\lambda_{a0} + \mathbf{1}\{S = 1\}\lambda_{a1} + \mathbf{1}\{S = 2\}\lambda_{a2})$, the annual income is $A = \exp\{\mu_A + 0.4\mu_EU_E + 0.1U_A\}$. The decision of the bank is modeled as
\begin{equation*}
        Y = \mathbf{1}\{U_Y < \mathrm{expit}(\beta_{0} + \mathbf{1}\{S = 1\}\beta_{1} +\mathbf{1}\{S = 2\}\beta_{2} + \beta_a A + \beta_e E)\}.
\end{equation*}
\end{example}

Here $\lambda_{e0}, \lambda_{e1}$, and $\lambda_{e2}$ decide the mean education year of the three race groups. $\lambda_{a0}, \lambda_{a1}$, and $\lambda_{a2}$ decide the median annual income. The annual income and the education year are positively correlated through $U_E$. $\beta_1$ and $\beta_2$ characterize the direct effect of the race information while the $\lambda$'s indicate the indirect effect together with $\beta_e$ and $\beta_a$. In this example, neither of Conditions \ref{as:strong_nos} and \ref{as:weak_nos} holds if $\beta_e$ and $\lambda_{e1}$ and/or $\lambda_{e2}$ are not zero due to the maximum operator in $f_E$. Even if $\lambda_{e1} = \lambda_{e2} = 0$, only the weaker Condition \ref{as:weak_nos} will hold due to the same reason for Example \ref{ex:1}.

The third example is a replica of the admission example constructed by \citet{wang2019equal}. 

\begin{example}\label{ex:3}
The admission committee of a university collects the gender $S$ and test score $T$ information from applicants. The gender is simulated from $S = \mathbf{1}\{U_S < 0.5\}$, where $S = 1$ for male and $S = 0$ for female. Let $U_T \sim \mathrm{Uniform}(0, 1)$ and we generate the test score as $T = \min\{\max\{0, \lambda S + U_T\}, 1\}$. The decision of the committee is 
$$
Y = \mathbf{1}\{U_Y < \mathrm{expit}(\beta_{0} + \beta_t T + \beta_s S)\}.
$$
\end{example}

It is worth noting that Example \ref{ex:3} also does not satisfy either of Conditions \ref{as:strong_nos} and \ref{as:weak_nos} due to the cutoff in the test score. There will be a positive probability ($\lambda$ to be exact) of seeing male students with scores equal to $1$ if $\lambda > 0$. Check that
$$
\frac{\partial}{\partial u_T} f_T(s, u_T) = \begin{cases} 1, & 0 < u_T < 1 - \lambda s\\ 0, & 1 - \lambda s < u_T < 1\end{cases}
$$
and we can see that its sign does change with $s$ for any fixed $u_T$. Therefore, neither of the proposed preprocessing methods can achieve CF in theory.
	
The parameters chosen for these examples are presented in Section 2 of the supplementary material.
	
\subsection{Fairness evaluation}
We compare our FLAP algorithm with 
\begin{enumerate}
	\item ML: the machine learning method using both sensitive and non-sensitive attributes without preprocessing, which is a logistic regression of $Y$ on $S$ and $A$;
	\item FTU: the Fairness-Through-Unawareness method which fits a logistic model of $Y$ on non-sensitive attributes $A$ alone without preprocessing;
	\item FL: the FairLearning algorithm in \citet{kusner2017counterfactual};
	\item AA: the Affirmative Action algorithm in \citet{wang2019equal}.
\end{enumerate}
All these methods can output a predicted score $p$ given the training data $\mathcal{D}$ and test attributes $(s, a)$, denoted $p(s, a; \mathcal{D})$ and draw the random decision $\hat{Y}$ from $\mathrm{Bernoulli}(p(s, a; \mathcal{D}))$. For ML method, $p(s, a; \mathcal{D}) = f_{ML}(s, a)$; for FTU method, that is $f_{FTU}(a)$. We denote the predicted scores of the FairLearning and AA algorithms as $f_{FL}(s, a; \mathcal{D})$ and $f_{AA}(s, a; \mathcal{D})$, respectively. For our FLAP method, we use the marginal distribution mapping procedure and try both the AML and the FTU learners described in Section \ref{sec:algorithm} and name the methods as FLAP-1 and FLAP-2. Their predicted scores are $f_{AML}(\mathcal{P}_{M}^{\mathcal{D}}(s, a))$ and $f_{FTU}(\mathcal{P}_{M}^{\mathcal{D}}(s, a))$, respectively. We use the test accuracy to measure the prediction performance and consider two metrics for measuring the counterfactual fairness. The CF-metric is defined as
\begin{equation*}
	\max_{r,t\in\mathcal{S}}\frac{1}{N}\sum_{i=1}^{N} |p(r, \hat{a}^{\mathcal{D}}_{M}(r, s_i, a_i); \mathcal{D}) - p(t, \hat{a}^{\mathcal{D}}_{M}(t, s_i, a_i); \mathcal{D})|,
\end{equation*}
where $N$ is the size of the test set and $\hat{a}^{\mathcal{D}}_{M}(s, s^*, a^*)$ is defined as $\hat{a}(s)$ in Definition \ref{def:margin}. Note that the CF-metric should be zero when decisions are CF under Condition \ref{as:weak_nos}. This metric is different from the AA-metric proposed by \citet{wang2019equal} in two folds. First, it allows us to consider more than two sensitive groups by taking the maximum of the pairwise difference of predicted scores, but it reduces to the AA-metric for two sensitive groups. Second, we use the marginal distribution mapping method to compute the counterfactual non-sensitive attributes $\hat{a}^{\mathcal{D}}_{M}(s, s^*, a^*)$ had the unit been in a different sensitive group $s$. This ensures that all the derived counterfactual attributes are within the range of observed attribute values. In comparison, \citet{wang2019equal} use the orthogonalization method to compute the counterfactual attributes and thus a female student having test score $0.98$ would have a counterfactual score of $1.48$ had she been a male if the male mean test score is $0.5$ higher than female. This out-of-range counterfactual score is unreasonable and problematic when being used as the input of the score prediction function $p$.

We should note that the CF-metric is a good representative of CF when the weak non-sensitive condition is met. When it is not satisfied, however, a CF decision is not guaranteed to have zero CF-metric. In absence of the non-sensitive conditions, the counterfactual non-sensitive attributes are unidentifiable, and thus \citet{wu2019counterfactual} propose to assess CF using the lower and upper counterfactual fairness bounds, which evaluate to
\begin{equation*}
\begin{split}
    \min_{i, (s', a')\in\mathcal{D}, s' \ne s_i} (p(s', a'; \mathcal{D}) - p(s_i, a_i; \mathcal{D})), \\
    \max_{i, (s', a')\in\mathcal{D}, s' \ne s_i} (p(s', a'; \mathcal{D}) - p(s_i, a_i; \mathcal{D})),
\end{split}
\end{equation*}
under our SCM. Though requiring no assumption on the functional form of $f_A$, these bounds are often too wide to be used for comparing different methods. It can be seen that if the range of the prediction probabilities $p$ is $[0,1]$ for each sensitive group $s$, then the bounds are always $(-1, 1)$. In order for the metric to be more informative, we propose the CF-bound defined as follows. Denote $\mathcal{D}_s$ the subset of the training data $\mathcal{D}$ where the sensitive attribute is $s$. Let $n_s$ be the size of $\mathcal{D}_s$ and $r_s([a]_j)$ be the ascending rank of the $j$th element of $a$ in $\mathcal{D}_s$. The CF-bound is
\begin{equation*}
	\max_{i, s'\in\mathcal{S}\setminus \{s_i\}} |\bar{p}(s', \cdot; \mathcal{D}) - p(s_i, a_i; \mathcal{D})|,
\end{equation*}
where $\bar{p}(s', \cdot; \mathcal{D})$ is the average of predicted scores $p(s', a'; \mathcal{D})$ for a sample of $a'$ randomly selected from the set $\{a: |r_{s'}([a]_j) - r_{s_i}([a_i]_j)| \le \delta n_s, j=1, \cdots, d\}$. The prespecified parameter $\delta$ determines the sampling range of the non-sensitive attributes. If we set $\delta=1$, meaning that $a'$ is sampled from all non-sensitive attributes seen in $\mathcal{D}_{s'}$, then the CF-bound performs like the maximum absolute value of the bounds defined in \citet{wu2019counterfactual} and it is also not very informative. On the other hand, setting $\delta=0$ will only make sense when the the rank of each element of the counterfactual non-sensitive attributes $a'$ in the $s'$ sensitive group is the same as the rank of each element of $a_i$ in the $s_i$ group, which is an assumption that may not hold in real world applications. Therefore, $\delta$ should be chosen from $(0, 1)$ to tell apart different methods and weaken the assumption needed for $f_A$. We use $\delta=0.05$ for the discussions below, and a comparison of the CF-bound results for different $\delta$'s is shown in Section \ref{sec:real}.

In Figure 2b, we set $\beta_0=-1$, $\beta_t=2$, $\beta_s=1$ and increase $\lambda$ from $0$ to $0.8$ to see how the mean difference of test scores affects fairness.

For Example \ref{ex:1}, we choose $c_1=0.01$, $c_2=4$, $c_3=0.2$, fix $\beta_0 = -1$, $\beta_a = 2$, $\beta_s = 1$, and $\lambda_a = 0.5$ while increase $\sigma_a$ from $1$ to $2.8$ to see how the difference in the variation of the non-sensitive attribute between sensitive groups affects fairness. As shown in Figure \ref{fig:metric_ex1}, the AA algorithm which essentially uses the orthogonalization method cannot achieve CF since Condition \ref{as:strong_nos} is not met. However, both FLAP algorithms' CF-metrics are zero when using the marginal distribution mapping preprocessing. The CF-bounds also show that the FLAP methods are the fairest among the methods we consider.
	
\begin{figure}%
    \centering
    \begin{subfigure}{1\columnwidth}
        \includegraphics[width=\columnwidth]{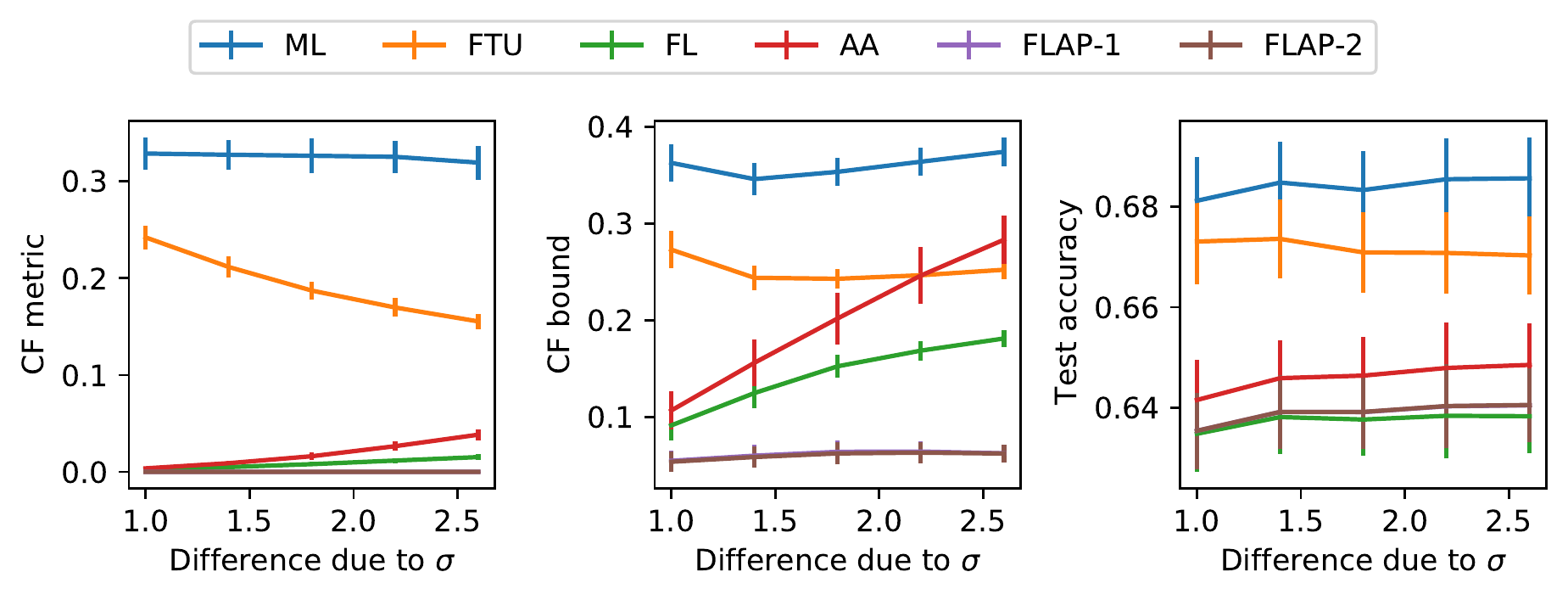}
        \caption{Example \ref{ex:1} with increasing $\sigma_a$.}
        \label{fig:metric_ex1}
    \end{subfigure}\hfill%
	\begin{subfigure}{1\columnwidth}
		\includegraphics[width=\columnwidth]{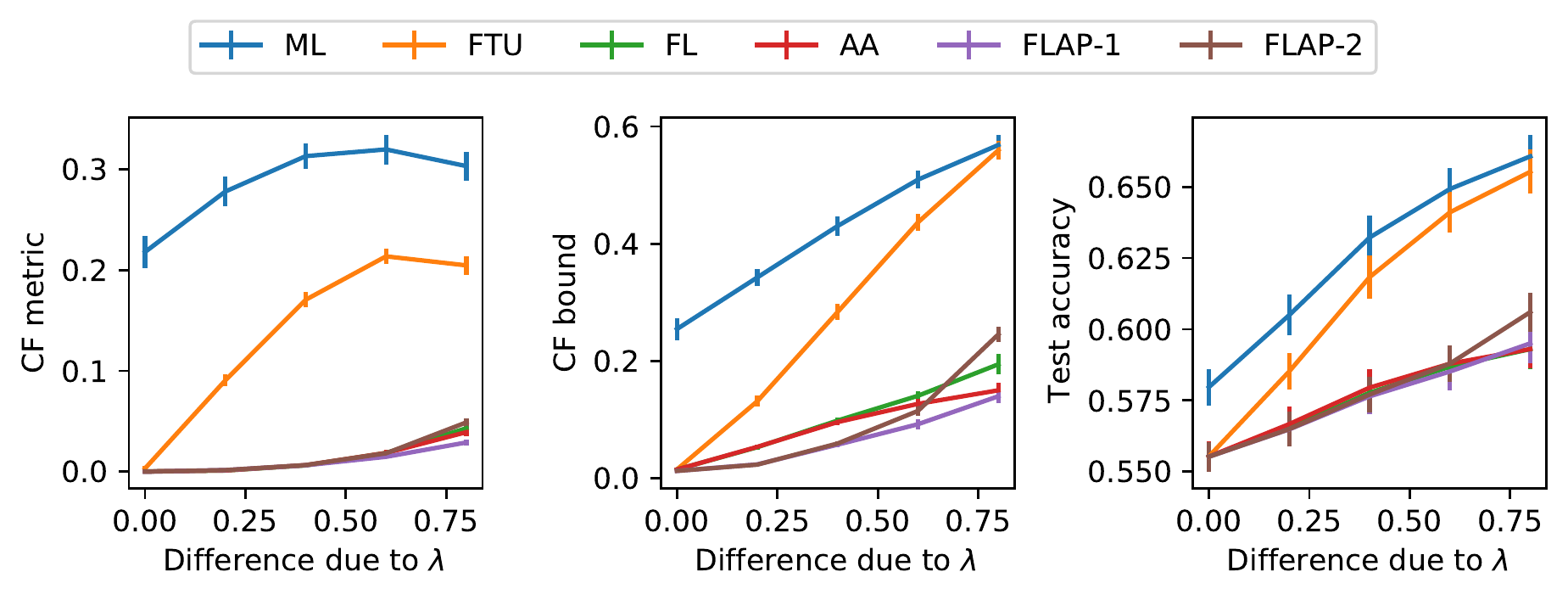}
        \caption{Example \ref{ex:3} with different mean scores by gender.}
        \label{fig:metric_ex3}
	\end{subfigure}%
	 \caption{Comparison of CF-metric, CF-bound and test accuracy of decision making algorithms. The lines and bars are the mean and standard deviation of the results in 100 experiments.}
\end{figure}

\citet{wang2019equal} showed that the AA algorithm can achieve zero AA-metric in Example \ref{ex:3}, but it does not satisfy either of the non-sensitive conditions for achieving CF. In Figure \ref{fig:metric_ex3}, we fix $\beta_0=-1$, $\beta_t=2$, $\beta_s=1$ and increase $\lambda$ from $0$ to $0.8$. It can be seen that all algorithms we consider cannot achieve CF, but the FLAP-1 algorithm still has the lowest CF-metric and CF-bound. There is no significant difference between the accuracy of the FL, AA, and FLAP algorithms in all examples. In general, we expect fairer predictions to have lower accuracy since they correct the discriminatory bias of the original decisions. 

\begin{figure}
    \centering
	\begin{subfigure}{\columnwidth}
		\includegraphics[width=\linewidth]{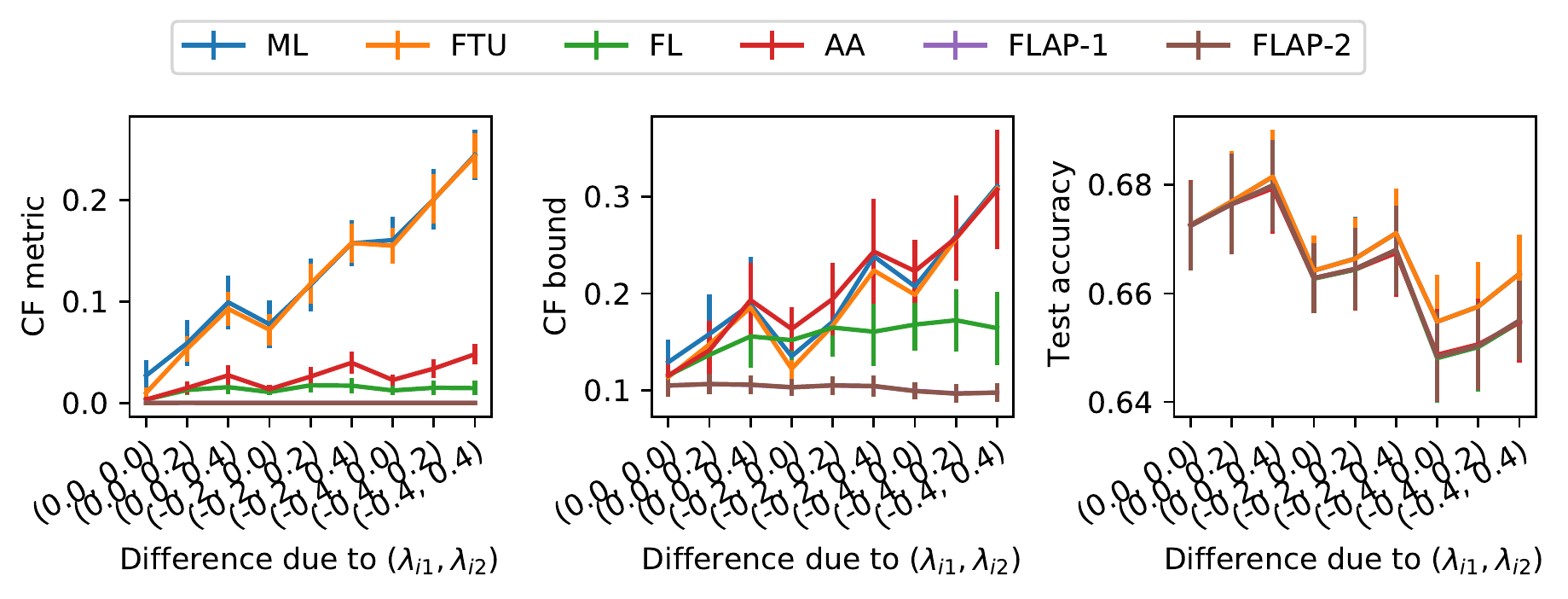}
        \caption{Ex. 2 with different mean income by race.}
        \label{fig:metric_ex2_income}
	\end{subfigure}\hfill%
	\begin{subfigure}{\columnwidth}
        \includegraphics[width=\linewidth]{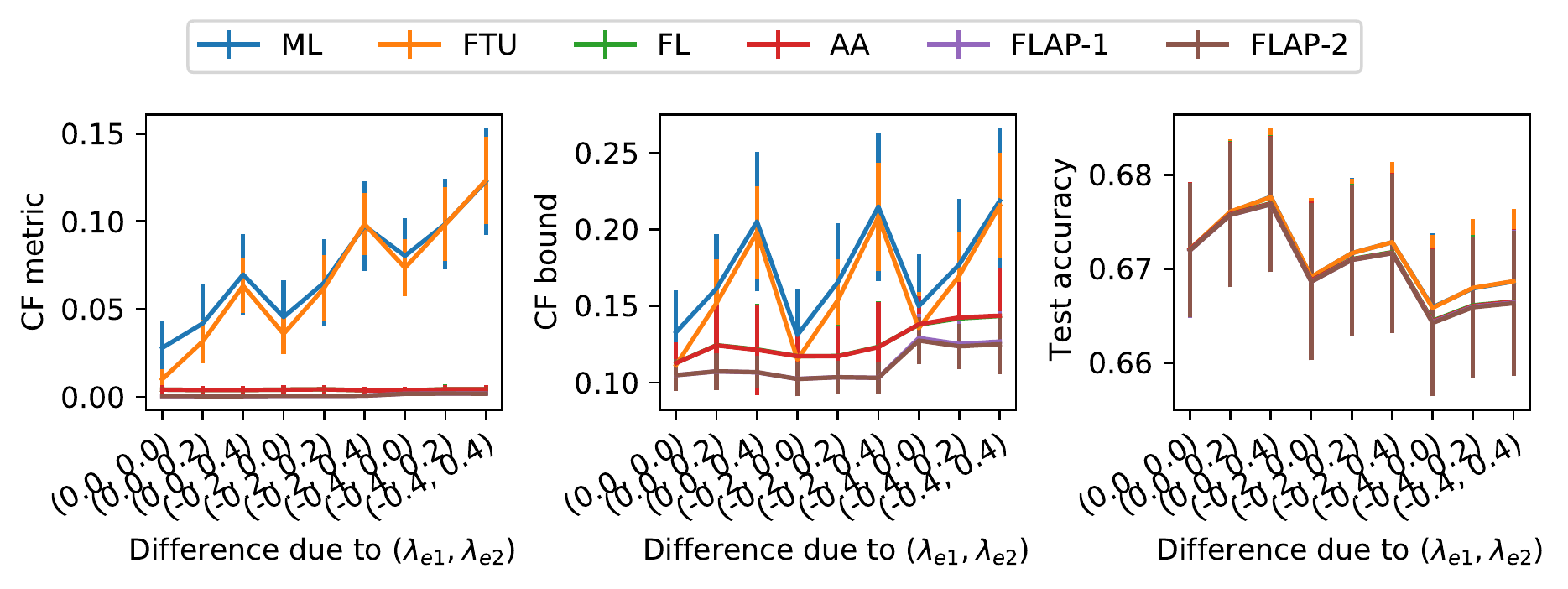}
        \caption{Ex. 2 with different mean education years by race.}
        \label{fig:metric_ex2_edu}
    \end{subfigure}%
	 \caption{Comparison of CF-metric, CF-bound and test accuracy of decision making algorithms}
	 \label{fig:metric_ex2}
\end{figure}

In Example \ref{ex:2}, we choose $\beta_0=-1$, $\beta_1=\beta_2=0$, $\beta_a=1$, $\beta_e=2$, $\lambda_{e0}=1.07$, $\lambda_{i0}=0.58$. In Figure \ref{fig:metric_ex2_income}, we change $(\lambda_{i1}, \lambda_{i2})$ while fix $\lambda_{e1}=0$ and $\lambda_{e2}=0$ to see how the mean difference of income affect fairness. The results are telling the same story as Figure \ref{fig:metric_ex1}: since only the weaker non-sensitive condition is met, the AA-algorithm cannot achieve CF but the FLAP algorithms with marginal distribution mapping procedure can.
	
In Figure \ref{fig:metric_ex2_edu}, we change $(\lambda_{e1}, \lambda_{e2})$ while fix $\lambda_{i1}=0$ and $\lambda_{i2}=0$ to see how the mean difference of education affect fairness. The results are similar to those of Figure \ref{fig:metric_ex3} where all algorithms we consider cannot achieve CF but the FLAP algorithms still have the lowest CF-metric and CF-bound.
	
\subsection{Fairness Test}
The Conditional Distance Correlation (CDC) test~\citep{wang2015conditional} is a well-established non-parametric test for conditional independence. We use it here to illustrate the performance of the fairness test with the three simulated examples. For each example, we use different combinations of parameters to obtain simulated datasets with different fairness levels, which are measured by the CF-metric. A CDC test with a significance level of 0.05 is then conducted to test if $Y\bot S | \mathcal{P}^{\mathcal{D}}(S, A)$ for each dataset. The simulation-test process is repeated 1000 times for each combination of parameters to estimate the power of the test, namely the probability of rejecting the null hypothesis that the decisions are counterfactually fair. The results are summarized in Figure \ref{fig:power}.

\begin{figure*}
    \centering
    \begin{subfigure}{0.5\textwidth}
        \includegraphics[width=\linewidth]{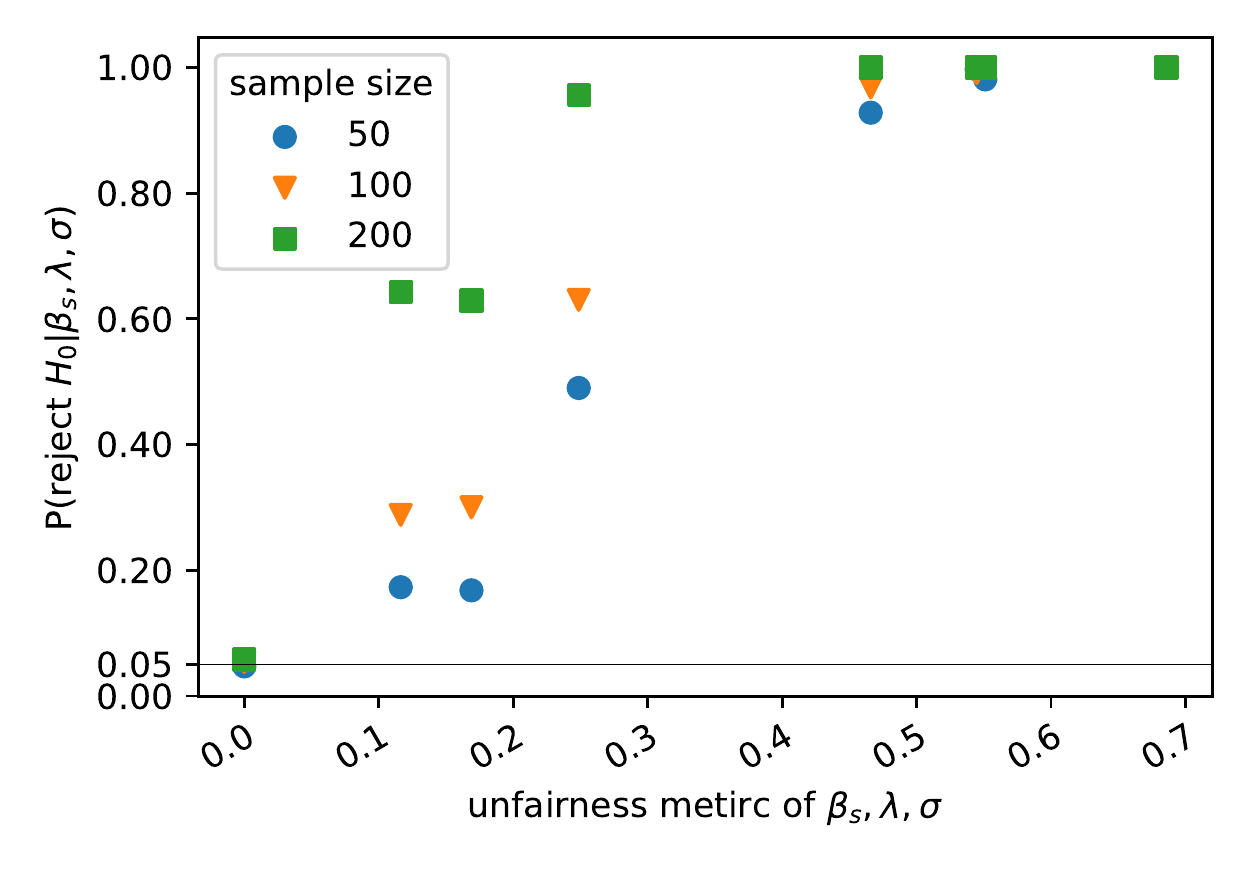}
        \caption{Example \ref{ex:1}.}
        \label{fig:power_ex1}
    \end{subfigure}\hfill%
	\begin{subfigure}{0.5\textwidth}
		\includegraphics[width=\linewidth]{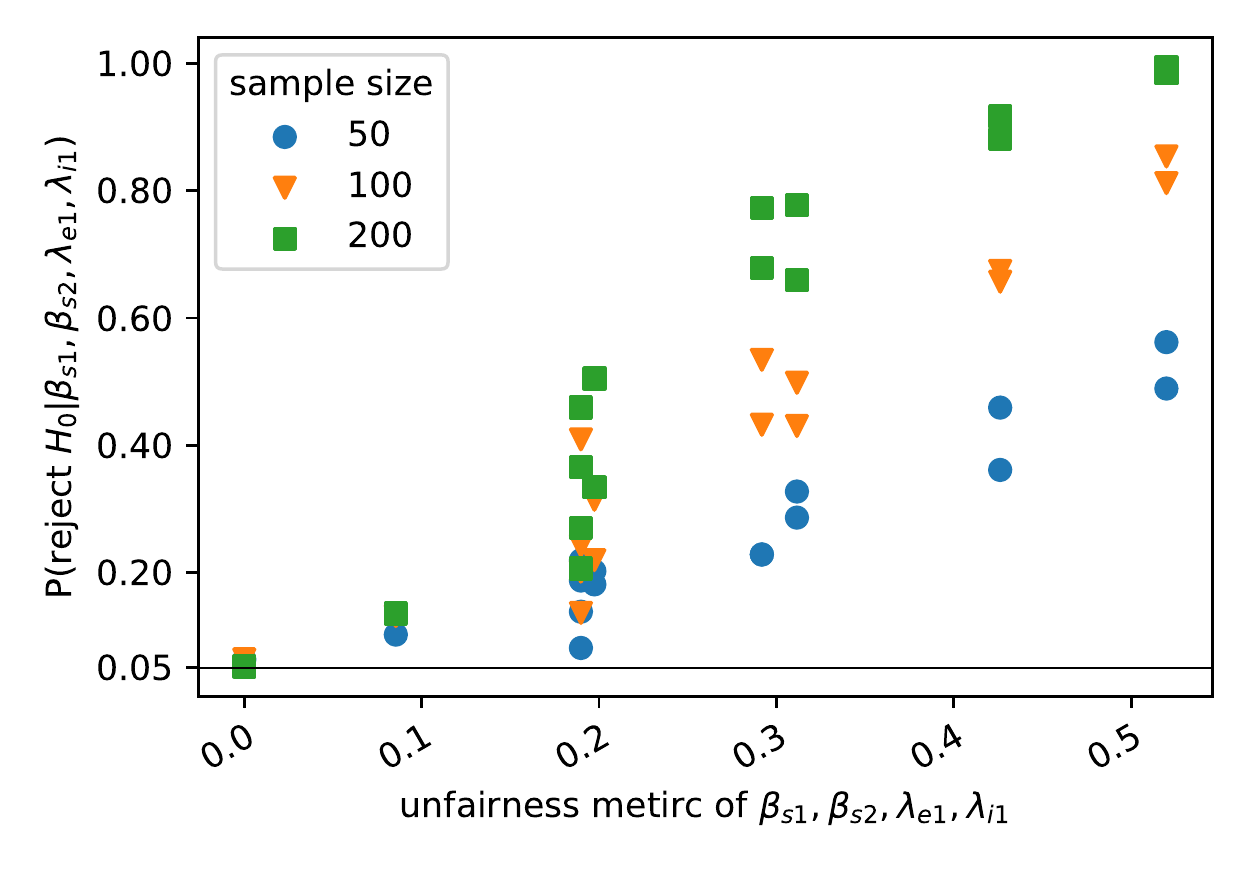}
        \caption{Example \ref{ex:2}.}
        \label{fig:power_ex2}
	\end{subfigure}\\
	\begin{subfigure}{0.5\textwidth}
        \includegraphics[width=\linewidth]{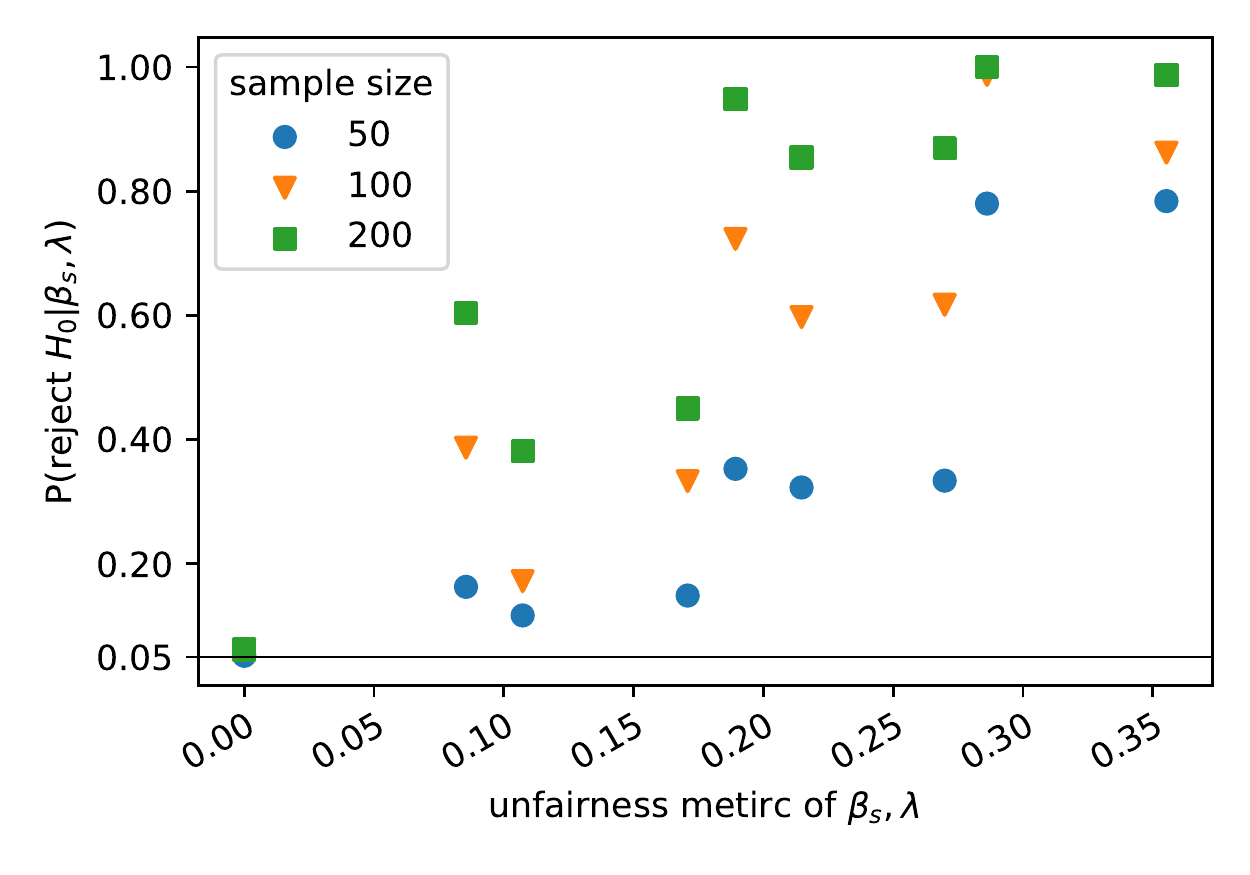}
        \caption{Example \ref{ex:3}.}
        \label{fig:power_ex3}
    \end{subfigure}%
	\caption{Power for testing CF using conditional independent test plot against the CF-metric}
	\label{fig:power}
\end{figure*}

When the decisions are generated fair, which are shown as the points with CF-metrics equal to zero, the type I error rate is around 0.05 for all examples. The power of the test grows as we make the decisions more unfair, or increase the sample size. 

\section{Real Data Analysis}\label{sec:real}
We apply our methods to a loan application dataset from a fintech company, the adult income dataset from UCI Machine Learning Repository\footnote{\url{https://archive.ics.uci.edu/ml/machine-learning-databases/adult/}} and the COMPAS recidivism data from ProPublica\footnote{\url{https://github.com/propublica/compas-analysis}} \citep{angwin2016bias}.

In the loan application case, the fintech lender aims to provide short-term credit to young salaried professionals by using their mobile and social footprints to determine their creditworthiness even when a credit history may not be available. %The fintech lender provides loans of an amount ranging from a minimum of \$141 to \$2816
To get a loan, a customer has to download the lending app, submit all the requisite details and documentation, and give permission to the lender to gather additional information from her/his smartphone, such as the number of apps, number of calls, and SMSs, and number of contacts and social connections. We obtained data from the lending firm for all loans granted from February 2016 to November 2018. The decisions $Y$ are whether or not the lender approves the loan applications. The attributes are applicants' gender, age, salary, and other information collected from their smartphones. Both gender and age are regarded as sensitive information here and we find that the decisions are made in favor of the senior and female applicants. Since we can only deal with categorical sensitive attributes, we divide the applicants into two age groups by the lower quartile of the age distribution and create a categorical variable $S\in\{0,1,2,3\}$ to denote the group of the applicants: female younger than 28; male younger than 28; female older than 28; and male older than 28. The effective sample size after removing missing values is 203,656.

Non-parametric conditional independence tests will not be efficient for this real case due to the large sample size. Therefore we test the conditional independence of $Y$ and $S$ given $\mathcal{P}_{M}^{\mathcal{D}}(S, A)$ by fitting a simple logistic model for $Y$ with $S$ and $\mathcal{P}_{M}^{\mathcal{D}}(S, A)$ as the explanatory variables and testing if the coefficient of $S$ is significantly different from zero. The p-value of the F-test is almost zero and indicates that the decisions are unfair for applicants in different groups. When other attributes are fixed to their means, the predicted approval probabilities of the four groups from the logistic model are 0.924 (young female), 0.899 (young male), 0.948 (senior female), and 0.946 (senior male), also indicating that the decisions are most in favor of the senior and female applicants.

We then separate the data into a training set of 193,656 samples and a test set of 10,000 samples. The training dataset is used to learn the decisions with different algorithms and the test dataset is used to evaluate the CF-metric, CF-bound, and accuracy. The results are summarized in Table \ref{tab:real}. Since the non-sensitive condition may not be satisfied in this real world application, the CF-bound may be a better indicator of CF than the CF-metric when they disagree with each other. While both the FLAP algorithms using the marginal distribution mapping preprocessing procedure have lower CF-metrics and CF-bounds compared with other algorithms, the FLAP algorithm using the AML learner (FLAP-1) is fairer than the one using the FTU-learner (FLAP-2) as shown by the CF-bound. Their test accuracy is slightly lower than the ML method. Note that in real-world applications, fairer decisions may not have lower accuracy as expected in the simulation studies because we do not have access to all the variables possessed by the original decision-maker. When the original decisions depend on additional information, the FLAP and other fair learning methods may yield predictions closer to or further away from the original decisions, and thus leading to lower or higher accuracy.

\begin{table}[!htbp]
    \centering
    \caption{Comparison of the CF-metric, CF-bound and test accuracy of decision making algorithms on the loan application data. FLAP-1(O) and FLAP-2(O) use the orthogonalization and FLAP-1(M) and FLAP-2(M) use the marginal distribution mapping preprocessing procedure.}
    \begin{tabular}{c|cccc}
    \hline
            &ML     &FTU    &FL     &AA    \\\hline
CF-metric	&0.0392	&0.0130	&0.0011	&0.0011\\
CF-bound    &0.1644 &0.1401 &0.1354 &0.1346\\
accuracy    &\textbf{0.8751}	&0.8749	&0.8742	&0.8734\\
    \hline
            &FLAP-1(O)	&FLAP-2(O)  &FLAP-1(M)  &FLAP-2(M)\\
    \hline
CF-metric	&0.0011	&0.0011 &0.0008 &\textbf{0.0007}\\
CF-bound    &0.1320 &0.1327 &\textbf{0.1217} &0.1227\\
accuracy    &0.8742	&0.8742 &0.8742 &0.8742\\
    \hline
    \end{tabular}
    \label{tab:real}
\end{table}

Table \ref{tab:delta-loan} shows the effect of $\delta$ on the CF-bound. In general, higher $\delta$ means a broader range of $a$'s are considered as the possible counterfactual non-sensitive attributes $a'$ had $S$ been $s'$, and thus the bound will become higher. When $\delta$ is close to 1, the metrics become similar to each other and make it difficult to tell the best method. As shown from the table, the results for different $\delta$'s are mostly consistent with each other.   

\begin{table}[!htbp]
	\centering
	\caption{Comparison of the CF-bound of decision making algorithms on the loan application data using different range parameter $\delta$.}
	\begin{tabular}{c|ccccc}
		\hline
		        &$\delta=0$ &$\delta=.025$  &$\delta=.05$   &$\delta=.1$    &$\delta=1$\\\hline
		ML      &0.1251     &0.1454         &0.1644         &0.1652         &0.2868\\
		FTU     &0.0794     &0.1223         &0.1401         &0.1610         &\textbf{0.2109}\\
		FL      &0.0975     &0.1139         &0.1354         &0.1564         &0.2344\\
		AA      &0.0963     &0.1133         &0.1346         &0.1558         &0.2345\\
		FLAP-1(O)  &0.0964  &0.1218         &0.1320         &0.1465         &0.2362\\
		FLAP-2(O)  &0.0975  &0.1225         &0.1327         &0.1471         &0.2362\\
		FLAP-1(M)  &\textbf{0.0501}&\textbf{0.1060}&\textbf{0.1217}&\textbf{0.1380}&0.2329\\
		FLAP-2(M)  &0.0512     &0.1067         &0.1227         &0.1390      &0.2354\\
		\hline
	\end{tabular}
	\label{tab:delta-loan}
\end{table}

We use the adult income data to predict whether an individual's income is higher than \$50K with information including sex, race, age, workclass, education, occupation, marital-status, capital gain and loss. Sex and race are regarded as sensitive attributes. The training set has 32,561 samples and the test set has 16281 samples. The comparison of the FLAP and other methods are shown in Table \ref{tab:public-1}.

\begin{table}[!htbp]
	\centering
	\caption{Comparison of the CF-metric, CF-bound and test accuracy of decision making algorithms on the adult income data.}
	\begin{tabular}{c|cccc}
		\hline
		            &ML     &FTU    &FL     &AA\\\hline
		CF-metric	&0.2779	&0.2338	&0.0228	&0.0268\\
		CF-bound    &0.9152 &0.8421 &0.7166 &0.7656\\
		accuracy    &0.7612	&0.7604	&0.7594	&\textbf{0.7644}\\
		\hline
		            &FLAP-1(O)	&FLAP-2(O)  &FLAP-1(M)  &FLAP-2(M)\\\hline
		CF-metric	&0.0280	    &0.0228     &\textbf{0.0020}     &0.0022\\
		CF-bound    &0.7357     &\textbf{0.7151}     &0.7721     &0.7470\\
		accuracy    &0.7548	    &0.7594     &0.7570     &0.7599\\
		\hline
	\end{tabular}
	\label{tab:public-1}
\end{table}

The COMPAS (Correctional Offender Management Profiling for Alternative Sanctions) recidivism data contains the demographic data such as sex, age, race, and record data such as priors count, juvenile felonies count, and juvenile misdemeanors count of over 10,000 criminal defendants in Broward County, Florida. The task is to predict whether they will re-offend in two years. According to ProPublica, ``Black defendants were often predicted to be at a higher risk of recidivism than they actually were.'' Here we treat sex and race as sensitive attributes and try to predict recidivism in a counterfactually fair manner. We only use the data for Caucasian, Hispanic, and African-American individuals due to the small sample sizes of other races. The remaining data are divided into a training set of 5,090 samples and a test set of 1697 samples. The results are shown in Table \ref{tab:public-2}.
	
\begin{table}[!htbp]
	\centering
	\caption{Comparison of the CF-metric, CF-bound and test accuracy of decision making algorithms on the COMPAS data.}
	\begin{tabular}{c|cccc}
		\hline
		            &ML     &FTU    &FL     &AA\\\hline
		CF-metric	&0.2274	&0.1406	&0.0054	&0.0060\\
		CF-bound    &0.6087 &0.5892 &0.4956 &0.4961\\
		accuracy    &\textbf{0.5744}	&0.5726	&0.5598	&0.5609\\
		\hline
		            &FLAP-1(O)	&FLAP-2(O)  &FLAP-1(M)  &FLAP-2(M)\\\hline
		CF-metric	&0.0058	    &0.0054     &\textbf{0.0026}     &0.0027\\
		CF-bound    &0.4852     &0.4854     &0.4012     &\textbf{0.4007}\\
		accuracy    &0.5605	    &0.5599     &0.5607     &0.5607\\
		\hline
	\end{tabular}
	\label{tab:public-2}
\end{table}
	
Here we still use $\delta=0.05$ for calculating the CF-bound. For the COMPAS data, both the CF-bound and CF-metric show that the FLAP methods using the marginal distribution mapping preprocessing procedure are fairer than other fair learning algorithms. For the adult income data, while the CF-metric supports the FLAP methods using the marginal distribution mapping preprocessing procedure, the CF-bound is in favor of the FLAP-2 algorithm with the orthogonalization procedure. Considering the fact that the CF-bound is very high for all methods, it is likely that the adult income data does not satisfy the non-sensitive conditions and none of the methods achieves CF. The accuracy of all fair learning algorithms is comparable to the ML method for both datasets.

Tables \ref{tab:delta-adult}, \ref{tab:delta-compas} show the effect of $\delta$ on the CF-bound for the adult income data and the COMPAS data, respectively. Similar to Table 2, higher $\delta$ results in higher bound and the metrics become similar to each other when $\delta$ is close to 1. Our choice of $0.05$ produces results that are mostly consistent with those using lower or higher $\delta$'s.   

\begin{table}[!htbp]
	\centering
	\caption{Comparison of the CF-bound of decision making algorithms on the adult income data using different range parameter $\delta$.}
	\begin{tabular}{c|ccccc}
		\hline
		        &$\delta=0$ &$\delta=.025$  &$\delta=.05$   &$\delta=.1$    &$\delta=1$\\\hline
		ML      &0.9790     &0.9067         &0.9152         &0.9189         &0.9336\\
		FTU     &0.9689     &0.8236         &0.8421         &0.8512         &0.8901\\
		FL      &0.8715     &0.6834         &0.7166         &0.7196         &0.7991\\
		AA      &0.9408     &0.7363         &0.7656         &0.7655         &0.7990\\
		FLAP-1(O)  &0.9303  &0.7055         &0.7357         &0.7230         &\textbf{0.7803}\\
		FLAP-2(O)  &0.8715  &0.6809         &\textbf{0.7151}&\textbf{0.7056}&0.7905\\
		FLAP-1(M)  &0.5366  &0.6728         &0.7721         &0.7853         &0.7932\\
		FLAP-2(M)  &\textbf{0.5206}&\textbf{0.6471}&0.7470  &0.7587         &0.7976\\
		\hline
	\end{tabular}
	\label{tab:delta-adult}
\end{table}

\begin{table}[!htbp]
	\centering
	\caption{Comparison of the CF-bound of decision making algorithms on the COMPAS data using different range parameter $\delta$.}
	\begin{tabular}{c|ccccc}
		\hline
		        &$\delta=0$ &$\delta=.025$  &$\delta=.05$   &$\delta=.1$    &$\delta=1$\\\hline
		ML      &0.5770     &0.6210         &0.6087         &0.6918         &0.6946\\
		FTU     &0.5043     &0.5526         &0.5892         &0.6332         &0.6393\\
		FL      &\textbf{0.4587}&0.4516     &0.4956         &0.5249         &0.5480\\
		AA      &0.4620     &0.4578         &0.4961         &0.5348         &0.5507\\
		FLAP-1(O)  &0.4623  &0.4418         &0.4852         &0.5354         &0.5450\\
		FLAP-2(O)  &\textbf{0.4587}  &0.4421&0.4854         &0.5350         &\textbf{0.5431}\\
		FLAP-1(M)  &0.7709  &\textbf{0.3735}&0.4012         &0.4484         &0.5645\\
		FLAP-2(M)  &0.7763  &0.3839         &\textbf{0.4007}&\textbf{0.4472}&0.5620\\
		\hline
	\end{tabular}
	\label{tab:delta-compas}
\end{table}

\section{Discussion}
We propose two data preprocessing procedures and the FLAP algorithm to make counterfactually fair decisions. The algorithm is general enough so that any learning methods from logistic regression to neural networks can be used, and counterfactual fairness is guaranteed regardless of the learning methods. The orthogonalization procedure is faster and ensures counterfactually fair decisions when the strong non-sensitive condition is met. The marginal distribution mapping procedure is more complex but guarantees fairness under the weaker non-sensitive condition, which is satisfied by most common types of non-sensitive attributes after reparameterization. Even when the non-sensitive attributes contains mixtures of continuous and discrete variables, the FLAP method is still fairer than other methods we considered as shown in our data analysis.

We also prove the equivalence between counterfactual fairness and the conditional independence of decisions and sensitive attributes given the processed non-sensitive attributes under the non-sensitive assumptions. We illustrate that the CDC test is reliable for testing counterfactual fairness when the sample size is small. When the size gets bigger, however, we need a more efficient testing method for the fairness test.

\bibliography{main}
\bibliographystyle{apalike}

\end{document}